\newtheorem{assumption}{Assumption}
\crefname{assumption}{Assumption}{Assumptions}
\crefname{theorem}{Theorem}{Theorems}
\crefname{equation}{}{}
\crefname{ALC@unique}{Line}{Lines}
\newcommand{\tikzcircle}[2][red,fill=red]{\tikz[baseline=-0.5ex]\draw[#1,radius=#2] (0,0) circle ;}%
\let\originalleft\left
\let\originalright\right
\renewcommand{\left}{\mathopen{}\mathclose\bgroup\originalleft}
\renewcommand{\right}{\aftergroup\egroup\originalright}
\newcommand{\paren}[1]{\left(#1\right)}
\newcommand{\bracket}[1]{\left[#1\right]}
\newcommand{\curly}[1]{\{#1\}}
\newcounter{myalg}
\newcommand\thefontsize{The current font size is: \f@size pt}
\definecolor{LightGray}{gray}{0.9}
\definecolor{MPL-blue}{HTML}{1F77B4}
\definecolor{MPL-orange}{HTML}{FF7F0E}
\definecolor{MPL-green}{HTML}{2CA02C}
\definecolor{MPL-red}{HTML}{D62728}
\definecolor{MPL-purple}{HTML}{9467BD}
\definecolor{MPL-brown}{HTML}{8C564B}
\definecolor{MPL-pink}{HTML}{E377C2}
\definecolor{MPL-gray}{HTML}{7F7F7F}
\definecolor{MPL-olive}{HTML}{BCBD22}
\definecolor{MPL-cyan}{HTML}{17BECF}
\DeclareMathOperator*{\argmax}{argmax} % thin space, limits underneath in displays
\DeclareMathOperator*{\argmin}{argmin}
\DeclareMathOperator{\E}{\mathbb{E}}
\DeclareMathOperator{\Prob}{\mathbb{P}}
\DeclareMathOperator{\R}{\mathbb{R}}
\newcommand{\eqdist}{\stackrel{\mathclap{D}}{=}}
\renewcommand{\paragraph}[1]{\textbf{#1}\hspace{1em}}
\begin{document}

\title{Value-Distributional Model-Based Reinforcement Learning}

\author{\name Carlos E. Luis \email carlosenrique.luisgoncalves@bosch.com \\
    \addr Bosch Corporate Research, TU Darmstadt\\
    Robert-Bosch-Campus 1, 71272 Renningen (Germany)
    \AND
    \name Alessandro G. Bottero \email alessandrogiacomo.bottero@bosch.com \\
    \addr Bosch Corporate Research, TU Darmstadt
    \AND
    \name Julia Vinogradska \email julia.vinogradska@bosch.com \\
    \addr Bosch Corporate Research
    \AND
    \name Felix Berkenkamp \email felix.berkenkamp@bosch.com \\
    \addr Bosch Corporate Research
    \AND
    \name Jan Peters \email jan.peters@tu-darmstadt.de \\
    \addr TU Darmstadt, German Research Center for AI (DFKI), Hessian.AI
    }

\editor{}

\maketitle
% \begin{restatable}{proposition}{universe}
% We'd like to introduce the \bfseries{theorem of the universe}, the theorem says that $1+1 = 2$
% \end{restatable}

\begin{abstract}
    Quantifying uncertainty about a policy's long-term performance is important to solve
    sequential decision-making tasks. We study the problem from a model-based Bayesian
    reinforcement learning perspective, where the goal is to learn the posterior
    distribution over value functions induced by parameter (epistemic) uncertainty of
    the Markov decision process. Previous work restricts the analysis to a few moments
    of the distribution over values or imposes a particular distribution shape, e.g.,
    Gaussians. Inspired by distributional reinforcement learning, we introduce a Bellman
    operator whose fixed-point is the value distribution function. Based on our theory,
    we propose Epistemic Quantile-Regression (EQR), a model-based algorithm that learns
    a value distribution function. We combine EQR with soft actor-critic (SAC) for
    policy optimization with an arbitrary differentiable objective function of the
    learned value distribution. Evaluation across several continuous-control tasks shows
    performance benefits with respect to both model-based and model-free algorithms. The
    code is available at \url{https://github.com/boschresearch/dist-mbrl}.
\end{abstract}

\begin{keywords}
    Model-Based Reinforcement Learning, Bayesian Reinforcement Learning, Distributional
    Reinforcement Learning, Uncertainty Quantification, Quantile Regresion.
\end{keywords}

\section{Introduction}
\label{sec:introduction}
Reinforcement learning (RL) tackles optimal decision-making in an unknown Markov Decision Process
(MDP) \citep{sutton_reinforcement_2018}. Uncertainty is at the heart of the RL problem: on one hand,
aleatoric uncertainty refers to the stochasticity in the MDP transitions and the RL agent's action
selection; on the other hand, \emph{epistemic} uncertainty appears due to lack of knowledge about
the MDP. During policy evaluation, both sources of uncertainty induce a distribution of possible
returns, which should be considered for policy optimization. For instance, in high-stakes
applications like medical treatments, accounting for aleatoric noise is key towards training
risk-averse policies \citep{chow_risk-sensitive_2015,keramati_being_2020}. Similarly, effective
exploration can be achieved by proper handling of epistemic uncertainty
\citep{deisenroth_pilco_2011,curi_efficient_2020}.

Two paradigms have emerged to capture uncertainty in the predicted outcomes of a policy. First,
\emph{distributional} RL \citep{bellemare_distributional_2017} models the aleatoric uncertainty
about returns, due to the inherent noise of the decision process. In contrast, \emph{Bayesian} RL
\citep{ghavamzadeh_bayesian_2015} captures the epistemic uncertainty about the unknown
\emph{expected} return of a policy, denoted as the \emph{value} function, due to incomplete
knowledge of the MDP. As such, the distribution over outcomes from each perspective has
fundamentally different meaning and utility. If we care about effective exploration of
\emph{unknown} (rather than stochastic) outcomes, then Bayesian RL is the appropriate choice of
framework \citep{osband_deep_2019}. 

In this paper, we focus on the Bayesian RL setting where a posterior distribution over possible MDPs
induces a distribution over value functions. The posterior over values naturally models the
epistemic uncertainty about the long-term performance of the agent, which is the guiding principle
behind provably-efficient exploration \citep{strehl_analysis_2008,jaksch_near-optimal_2010}. An open
question remains how to effectively model and learn the posterior distribution over value functions.
We approach this problem by using tools from distributional RL in the Bayesian framework. The key
idea is that, for time-inhomogeneous MDPs with a tabular representation, the value distribution
follows a Bellman equation from which we can derive an iterative estimation algorithm that resembles
methods from distributional RL. Based on this insight, we present a novel algorithm that uses a
learned value distribution for policy optimization.

\paragraph{Our contribution.}
We introduce the value-distributional Bellman equation that describes the relationship between the
value distributions over successive steps. Moreover, we show that the fixed-point of the associated
Bellman operator is precisely the posterior value distribution. Then, leveraging tools from
distributional RL, we propose a practical algorithm for learning the \emph{quantiles} of the value
distribution function. We propose Epistemic Quantile-Regression (EQR), a model-based policy
optimization algorithm that learns a distributional value function. Finally, we combine EQR with
soft actor-critic (SAC) to optimize a policy for any differentiable objective function of the
learned value distribution (e.g., mean, exponential risk, CVaR, etc.)

\subsection{Related work} 

\paragraph{Distributional RL.}
The treatment of the policy return as a random variable dates back to \citet{sobel_variance_1982},
where it is shown that the higher moments of the return obeys a Bellman equation. More recently,
distributional RL has emerged as a paradigm for modelling and utilizing the entire distribution of
returns \citep{tamar_temporal_2013,bellemare_distributional_2023}, with real-world applications
including guidance of stratospheric balloons \citep{bellemare_autonomous_2020} and super-human
race-driving in simulation \citep{wurman_outracing_2022}. The distributional RL toolbox has expanded
over the years with diverse distribution representations
\citep{bellemare_distributional_2017,dabney_distributional_2018,dabney_implicit_2018,yang_fully_2019}
and deeper theoretical understanding
\citep{bellemare_distributional_2019,rowland_analysis_2018,lyle_comparative_2019}. In our core
algorithm, we use quantile-regression (QR) by \citet{dabney_distributional_2018} as a tool for
learning the value, rather than return, distribution. Moreover, QR has been integrated with soft
actor-critic (SAC) \citep{haarnoja_soft_2018} for improved performance
\citep{wurman_outracing_2022,kuznetsov_controlling_2020}. At a high level, this paper combines model
learning with quantile-regression, which is then integrated with SAC for policy optimization.

\paragraph{Bayesian RL.}
Model-free approaches to Bayesian RL directly model the distribution over values, e.g., with
normal-gamma priors \citep{dearden_bayesian_1998}, Gaussian Processes \citep{engel_bayes_2003} or
ensembles of neural networks \citep{osband_deep_2016}. Instead, model-based Bayesian RL represents
uncertainty in the MDP dynamics, which must then be propagated to the value function. For
instance, the PILCO algorithm by \citet{deisenroth_pilco_2011} learns a Gaussian Process model
of the transition dynamics and integrates over the model's total uncertainty to obtain the expected
values. In order to scale to high-dimensional continuous-control problems, \citet{chua_deep_2018}
use ensembles of probabilistic neural networks (NNs) to capture both aleatoric and epistemic
uncertainty, as first proposed by \citet{lakshminarayanan_simple_2017}. Both approaches propagate
model uncertainty during policy evaluation and improve the policy via greedy exploitation over this
model-generated noise. 

Closest to our problem setting are approaches that explicitly model the value distribution function
or statistics thereof. The uncertainty Bellman equation (UBE) offers a framework to estimate the
\emph{variance} of the value distribution
\citep{odonoghue_uncertainty_2018,zhou_deep_2020,luis_model-based_2023}.
\citet{jorge_inferential_2020} propose a principled backwards induction framework to estimate value
distributions, with the caveat of assuming a Gaussian parameterization for practical implementation.
Perhaps closest to our approach is the work by \citet{dearden_model_1999}, which introduces a local
sampling scheme that maintains a sample-based approximation of the value distribution, which is
updated using a Bellman equation. While it does not assume a restrictive parametric form for the
distribution, it ignores that samples from the value distribution at successive states are
correlated through the Bellman equation; we make a similar approximation in our theory, see
\cref{sec:value_distribution_bellman_eq}. In our work, rather than generating random samples of the
value distribution, we keep track of a relevant set of statistics \citep{rowland_statistics_2019},
e.g., evenly spread quantiles, that have adequate coverage and representation power of the
underlying distribution.

\paragraph{Mixed Approaches.} Recent methods have combined distributional and
model-based RL methods. \citet{kastner_distributional_2023} introduce the distributional
model equivalence principle to train models that can plan optimally for risk-sensitive
objectives. \citet{moskovitz_tactical_2021,eriksson_sentinel_2022} aim to capture both
sources of uncertainty by  training an ensemble of return-distributional critics, where
each critic models aleatoric uncertainty and the ensemble recovers epistemic
uncertainty. Our approach is fundamentally different: we leverage tools from
distributional RL to model the epistemic uncertainty around \emph{expected} returns,
i.e., we average over aleatoric noise. Moreover, our experiments show that our value
representation with quantiles leads to substantial gains in performance over an ensemble
of critics.

\paragraph{Uncertainty-Aware Policy Optimization.} There exists a wide variety of policy
optimization objectives that leverage epistemic uncertainty. Multi-model MDPs (MMDPs)
\citep{steimle_multi-model_2021} consider a discrete distribution of MDPs and study the optimization
of the average value under the MDP uncertainty. Solving exactly for the optimal policy is only
possible for small MMDPs, but more recent methods can scale to larger problems
\citep{su_solving_2023}. Robust MDPs optimize for risk-averse objectives, like the percentile
criterion (also known as value-at-risk) \citep{delage_percentile_2010,behzadian_optimizing_2021}. In
practice, robust MDP objectives tend to be overly conservative, thus soft-robustness
\citep{derman_soft-robust_2018} has been proposed as an alternative objective, which is identical to
the risk-neutral objective of MMDPs.

In this paper we approach uncertainty-aware optimization from a different perspective. Instead
of fixing the policy optimization objective and designing a particular algorithm to solve for that
objective, we propose a general-purpose method that aims to optimize \emph{any} differentiable
function of a learned distribution of values. The strength of our approach is that it flexibly
accomodates an entire family of objectives that might suit different tasks, all within the same
algorithm and with minimal changes.

\section{Background \& Notation}
\label{sec:background}
In this section, we provide the relevant background and formally introduce the problem
of value distribution estimation. We use upper-case letters to denote random variables
and lower-case otherwise. The notation $\mathcal{P}(\mathcal{X})$ refers to the space of
probability distributions over the set $\mathcal{X}$, such that $\nu \in
\mathcal{P}(\mathcal{X})$ is a probability measure with the usual\footnote{Refers to the
power set $\sigma$-algebra for finite $\mathcal{X}$, the Borel $\sigma$-algebra for
infinite $\mathcal{X}$ and the product $\sigma$-algebra on products of such spaces
\citep{bellemare_distributional_2023}.} $\sigma$-algebra. We forego measure-theoretic
formalisms and further qualifications of measurability will be implied with respect to
the usual $\sigma$-algebra \citep[cf.][Remark 2.1]{bellemare_distributional_2023}.

\subsection{Markov Decision Processes}
We consider an agent that acts in an infinite-horizon MDP $\mathcal{M}
=\set{\mathcal{S},\mathcal{A},p, r, \gamma}$ with finite state space $\mathcal{S}$, finite action
space $\mathcal{A}$, unknown transition function $p: \mathcal{S} \times \mathcal{A} \to
\mathcal{P}(\mathcal{S})$ that maps states and actions to the set of probability distributions over
$\mathcal{S}$, a known\footnote{The theory results can be easily extended to unknown reward
functions.} and bounded reward function $r: \mathcal{S} \times \mathcal{A} \to \R$, and a discount
factor $\gamma \in [0,1)$. The agent is equipped with an action-selection stochastic policy $\pi:
\mathcal{S} \to \mathcal{P}(\mathcal{A})$ that defines the conditional probability distribution
$\pi(a\mid s)$, $(s,a) \in \mathcal{S} \times \mathcal{A}$. Given an initial state $s \in
\mathcal{S}$ and a policy $\pi$, the RL agent interacts with the environment and generates a
random \emph{trajectory} $T = \set{S_h, A_h, R_h}_{h=0}^\infty$, with $S_0 = s$ and for $h \geq 0$
we have $A_h \sim \pi(\cdot \mid S_h)$, $R_h = r(S_h, A_h)$, $S_{h+1} \sim p(\cdot \mid S_h, A_h)$.

\subsection{Return-Distributional Reinforcement Learning}
\label{subsec:return_dist_rl}
The \emph{return} of a policy, denoted $Z^\pi$, is a random variable defined as the discounted sum
of rewards along a trajectory, $Z^\pi(s) = \sum_{h=0}^\infty \bracket{\gamma^h R_h \mid S_0 = s}$. The randomness
in trajectories and returns originates from the inherent stochasticity of the environment dynamics
and the policy, oftentimes called \emph{aleatoric} uncertainty. A common objective for the RL agent
is to maximize the \emph{expected} return, where we average over this aleatoric noise to obtain a
deterministic function known as the \emph{value}. The value function of policy $\pi$ under dynamics
$p$, starting from $s \in \mathcal{S}$ is defined as a map $v^{\pi,p}: \mathcal{S} \to \R$ and is
given by
\begin{equation}
  \label{eq:standard_value}
  v^{\pi,p}(s) = \E_T\bracket{\sum_{h=0}^\infty \gamma^h R_h \middle| S_0=s, p},
\end{equation}
where we explicitly condition on the dynamics $p$; although redundant in the standard RL setting,
this notation will become convenient later when we consider a distribution over dynamics.

In contrast to learning the value function, \emph{return-distributional} RL aims to learn the entire
distribution of returns by leveraging the random variable \emph{return-distributional} Bellman
equation
\citep{bellemare_distributional_2017}
\begin{equation}
  \label{eq:return_dist_bellman}
  Z^\pi(s) \eqdist r(s, A) + \gamma Z^\pi(S'),
\end{equation}
where $A \sim \pi(\cdot \mid s)$, $S' \sim p(\cdot \mid s, A)$ and $(\eqdist)$ denotes equality in
distribution, i.e., the random variables in both sides of the equation may have different outcomes,
but they share the same probability distribution.

\subsection{Bayesian RL}
In this paper, we adopt a Bayesian perspective and model the unknown dynamics $p$ as a random
transition function $P$ with some prior distribution $\Phi(P)$. As the agent acts in $\mathcal{M}$,
it collects data\footnote{We omit time-step subscripts and refer to dataset $\mathcal{D}$ as
the collection of all available transition data.} $\mathcal{D}$ and obtains the posterior
distribution $\Phi(P \mid \mathcal{D})$ via Bayes' rule. More concretely, for tabular problems
we consider priors that admit analytical posterior updates (e.g., Dirichlet, Gaussian)
\citep{dearden_model_1999}, and for continuous state-action spaces we use neural network ensembles
\citep{lakshminarayanan_simple_2017} which have been linked to approximate Bayesian inference
\citep{osband_randomized_2018}.

\begin{figure}
  \includegraphics[width=\textwidth]{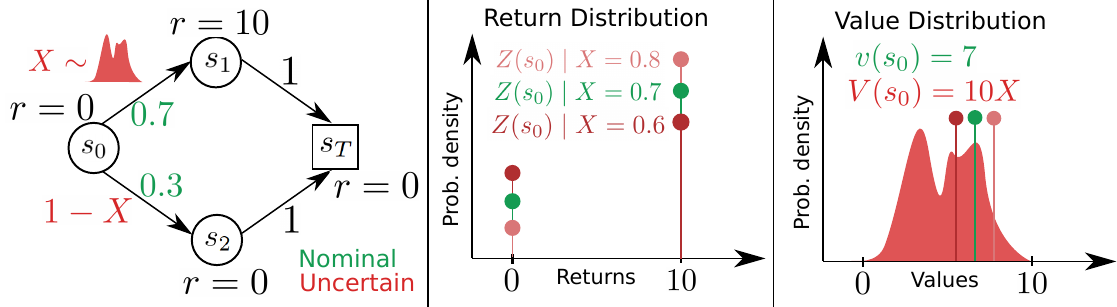}
  \caption{Return and value distributions in Bayesian RL. \textbf{(Left)} MDP with uncertain
  transition probability from $s_0$ given by a random variable $X \in [0, 1]$. \textbf{(Middle)}
  Return distributions at $s_0$ for realizations of $X$, including the nominal dynamics (green). The
  return distribution captures the \emph{aleatoric} noise under the sampled dynamics.
  \textbf{(Right)} Distribution of values at $s_0$. In the nominal case, the value $v(s_0)$ is a
  scalar obtained from averaging the aleatoric uncertainty of the return distribution $Z(s_0)$ under
  the nominal dynamics. In our setting, $V(s_0)$ is a random variable due to the \emph{epistemic}
  uncertainty around the MDP dynamics. To sample from $V(s_0)$ is equivalent to first sample $X =
  \tilde{x}$, compute the \emph{conditional} return distribution $Z(s_0) | X = \tilde{x}$ and
  finally average over the aleatoric noise.}
  \label{fig:return_vs_value_distribution}
\end{figure}

In what follows, we will assume $P \sim \Phi(P \mid \mathcal{D})$ and consider trajectories $T$
defined as previously but with next-states as $S_{h+1} \sim P(\cdot \mid S_h, A_h)$. Notably, the
sampling process of next states mixes two sources of uncertainty: the aleatoric noise, as with the
original MDP, but also the uncertainty in $P$ due to finite data, often called \emph{epistemic}
uncertainty. Consequently, the aleatoric and epistemic noise in trajectories propagates to the
returns. We define the value function of policy $\pi$ as a random variable under the random
dynamics $P$ as
\begin{equation}
  \label{eq:random_value}
  V^\pi(s) = v^{\pi, P}(s).
\end{equation}
According to the value function definition in \cref{eq:standard_value}, $V^\pi$ is an
expectation over the trajectories $T$ \emph{conditioned} on the random variable $P$, which means the
aleatoric noise of trajectories is averaged out, but the epistemic uncertainty (due to the
conditioning on $P$) remains and is propagated to $V^\pi$. Intuitively, to obtain a sample from
$V^\pi$ is equivalent to sample from the posterior $\Phi(P\mid \mathcal{D})$ and calculate the
corresponding expected return, i.e., the value. As such, the stochasticity of $V^\pi$ vanishes as we
gather data and $\Phi(P \mid \mathcal{D})$ concentrates around the true transition function $p$.

The main focus of this paper is to study the \emph{value-distribution}\footnote{We focus
on state-value functions for simplicity, but the results have a straightforward
extension for state-action-value functions.} function $\mu^\pi: \mathcal{S} \to
\mathcal{P}(\R)$, such that $V^\pi(s) \sim \mu^\pi(s)$, $\forall s \in \mathcal{S}$. As
such, $\mu^\pi$ represents the distribution of the \emph{epistemic} noise around the
\emph{expected} return of a policy. In \cref{fig:return_vs_value_distribution}, we
illustrate in a simple MDP the fundamental difference between return distributions and
value distributions: the former captures aleatoric noise from the decision process,
while the latter models \emph{epistemic} uncertainty stemming from uncertain MDP
dynamics. Refer to \cref{fig:example_value_distribution} for another example of an
uncertain transition probability and the value distribution it induces. While both value
and return distributions aim to obtain a richer representation of complex random
variables, only the former characterizes the type of uncertainty that is valuable for
effective exploration of the environment. 

\begin{figure}[t]
	\centering
  \subfloat{
    \includegraphics{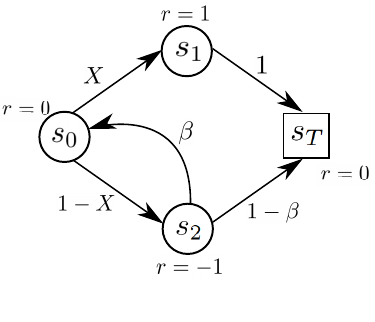}
  }
  \subfloat{
    \includegraphics{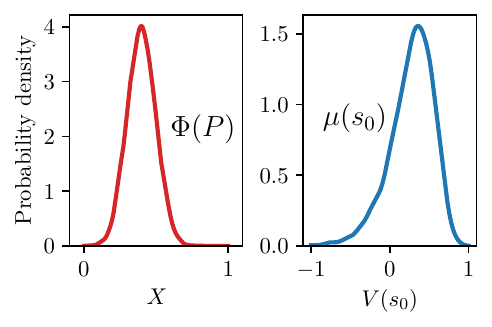}
  }
  \hfill
  \caption{Example value distribution. \textbf{(Left)} Uncertain MDP with a truncated Gaussian
  transition probability $X \sim \bar{\mathcal{N}}(\mu=0.4, \sigma=0.1)$ and a scalar
  (deterministic) $\beta \in [0, 1]$. For this example, we fixed $\beta=0.9$. \textbf{(Middle)}
  Distribution over MDPs, which corresponds directly to the distribution of $X$. \textbf{(Right)}
  Corresponding distribution of values for state $s_0$.}
  \label{fig:example_value_distribution}
\end{figure}

\section{The Value-Distributional Bellman Equation}
\label{sec:value_distribution_bellman_eq}
In this section, we establish the theoretical backbone of iterative algorithms for learning the
value-distribution function $\mu^\pi$. We include formal proofs in \cref{app:theory}.

For the nominal transition kernel $p$, we can relate the values at subsequent time steps using the
well-known Bellman equation
\begin{equation}
  \label{eq:standard_bellman}
  v^{\pi,p}(s) = \sum_a \pi(a\mid s)r(s,a) + \gamma \sum_{s', a} \pi(a \mid s) p(s' \mid s, a) v^\pi(s'),
\end{equation}
which holds for any policy $\pi$ and state $s \in \mathcal{S}$. The following statement is the
straightforward extension of \cref{eq:standard_bellman} in our Bayesian setting. While the result is
not novel, it serves as a building block towards our main theoretical contribution.

\begin{restatable}[Random Variable Value-Distribution Bellman Equation]{proposition}{randomvarbellman}
  \label{prop:rand_var_bellman}
  Let $V^\pi$ be the random value function defined in \cref{eq:random_value}. Then, it holds that   \begin{equation}
    \label{eq:distributional_bellman}
    V^\pi(s) = \sum_a \pi(a\mid s)r(s,a) + \gamma \sum_{s', a} \pi(a \mid s) P(s' \mid s, a)
    V^\pi(s'),
  \end{equation}
  for any policy $\pi$ and initial state $s \in \mathcal{S}$.
\end{restatable}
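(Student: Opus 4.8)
The plan is to derive the random-variable Bellman equation in \cref{eq:distributional_bellman} directly from the definition of $V^\pi$ in \cref{eq:random_value} by reducing it to the deterministic Bellman equation \cref{eq:standard_bellman} that holds for every fixed realization of the random dynamics. The key observation is that for any fixed transition kernel $p$, the value $v^{\pi,p}$ is a deterministic function of $p$ satisfying \cref{eq:standard_bellman} pointwise. Since $V^\pi(s)$ is defined as $v^{\pi,P}(s)$ with $P$ a random kernel, I would argue that the identity holds realization-by-realization and therefore holds for the induced random variables.

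Concretely, first I would fix an arbitrary realization $P = p$ of the random transition function drawn from the posterior $\Phi(P \mid \mathcal{D})$. For this fixed $p$, the standard Bellman equation \cref{eq:standard_bellman} gives
\begin{equation*}
  v^{\pi,p}(s) = \sum_a \pi(a\mid s)r(s,a) + \gamma \sum_{s', a} \pi(a \mid s) p(s' \mid s, a) v^{\pi,p}(s'),
\end{equation*}
for every $s \in \mathcal{S}$. Next I would substitute the definition $V^\pi(s) = v^{\pi,P}(s)$ and note that this equation is an algebraic identity between measurable functions of $P$: the left-hand side is $V^\pi(s)$ evaluated at the realization $p$, and the right-hand side is the corresponding measurable function of $p$ evaluated at the same realization. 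Because the two sides agree for every realization $p$ in the support of the posterior, they agree as random variables (almost surely), which is exactly \cref{eq:distributional_bellman}.

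The only subtlety to address carefully is that \cref{eq:distributional_bellman} is an equation between \emph{random variables} defined on a common probability space, not merely an equality in distribution: the same draw of $P$ couples $P(s'\mid s,a)$ and $V^\pi(s') = v^{\pi,P}(s')$ on the right-hand side with $V^\pi(s)$ on the left-hand side. I would emphasize that this coupling is automatic because both sides are pushforwards of the single random variable $P$ under measurable maps, so no independence or distributional assumption is invoked. This is precisely what distinguishes the present statement from the return-distributional equation \cref{eq:return_dist_bellman}, which is only an equality in distribution.

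The main obstacle, though a mild one, is to ensure the deterministic Bellman identity \cref{eq:standard_bellman} genuinely holds for \emph{every} realization $p$ in the posterior support with the same function $v^{\pi,p}$ appearing on both sides. This requires that for each fixed $p$ the value function is well-defined and unique, which follows from $\gamma \in [0,1)$ and the bounded reward $r$ (the Bellman operator is a contraction, so $v^{\pi,p}$ is its unique fixed point). Once uniqueness and well-definedness of $v^{\pi,p}$ are established uniformly over the support, the pointwise-in-$p$ identity lifts immediately to the random-variable statement, completing the proof.
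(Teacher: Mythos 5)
Your proof is correct, and it takes a genuinely different route from the paper's. The paper re-derives the Bellman decomposition from scratch for the conditional expectation $\E_{\tilde T}[\,\cdot \mid S_0=s, P]$: it splits off the first reward, applies the tower property, and invokes the Markov property and time homogeneity of trajectories to identify the inner expectation with $V^\pi(S_1)$, before integrating out $S_1$ against $P(\cdot\mid s,A_0)$. You instead treat the deterministic Bellman equation \cref{eq:standard_bellman} as already established for every fixed kernel $p$, and lift it to the random-variable statement by observing that both sides of \cref{eq:distributional_bellman} are the same measurable function of the single random object $P$, evaluated realization-by-realization. Your argument is more economical and makes the logical structure transparent — in particular, it isolates exactly why the identity holds in strict equality rather than merely in distribution (the coupling through the common draw of $P$), which the paper only remarks on after the proposition. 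The paper's approach buys a self-contained derivation that does not presuppose \cref{eq:standard_bellman} and makes explicit where the Markov property and time homogeneity enter; yours buys brevity at the cost of delegating that work to the standard result. One minor remark: the uniqueness/contraction argument you include is slightly more than is needed — the direction required here is only that the function $v^{\pi,p}$ \emph{defined} by the discounted sum in \cref{eq:standard_value} satisfies the Bellman identity, which follows from the first-step decomposition and bounded rewards with $\gamma\in[0,1)$; uniqueness of the fixed point is not invoked anywhere in the lift. This does not affect correctness.
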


Note that the random variable value-distribution Bellman equation in
\cref{eq:distributional_bellman} differs from the random variable return-distribution Bellman
equation in \cref{eq:return_dist_bellman} in that the former holds in strict equality, while the
latter holds in the weaker notion of equality in distribution. The main caveat of
\cref{eq:distributional_bellman} with respect to model-free distributional RL is that, in general,
$P(s' \mid s,a)$ and $V^\pi(s')$ are correlated.

We now shift from discussing the random value function to focus on the value
distribution function. First, we provide a definition for $\mu^\pi$ that holds in
the general case. Intuitively, we can think of $\mu^\pi$ as the result of \emph{pushing}
the probability mass of the posterior distribution $\Phi$ through the map defined by the
value function \cref{eq:standard_value}. To formalize our statement, we leverage the
notion of pushforward measures akin to \citet{rowland_analysis_2018}.
\begin{definition}
  \label{def:pushforward}
  Given measurable spaces $\mathcal{X}$ and $\mathcal{Y}$, a measurable function $f:
  \mathcal{X} \to \mathcal{Y}$ and a measure $\nu \in \mathcal{P}(\mathcal{X})$, the
  pushforward measure $f_{\#}\nu \in \mathcal{P}(\mathcal{Y})$ is defined by
  $f_{\#}\nu(B) = \nu(f^{-1}(B))$, for all Borel sets $B \subseteq \mathcal{Y}$.
\end{definition}
Informally, given a random variable $X \sim \nu$ and the map $f$ as in
\cref{def:pushforward}, the pushforward of $\nu$ by $f$, denoted $f_{\#}\nu$, is defined
as the distribution of the random variable $f(X)$ \citep{bellemare_distributional_2023}.

With slight abuse of notation, define the map $v^\pi: \mathcal{S} \times \mathscr{P} \to
\R$ where $\mathscr{P}$ denotes the set of all transition functions\footnote{The
set of all transition functions can also be written as
$\mathcal{P}(\mathcal{S})^{\mathcal{S} \times \mathcal{A}}$ in standard set theory
notation.} for $(\mathcal{S}, \mathcal{A})$, such that $v^\pi(s, p) = v^{\pi, p}(s)$
for any $p \in \mathscr{P}$, $s \in \mathcal{S}$. Then, the value distribution function
$\mu^\pi$ is simply the pushforward measure of $\Phi$ by $v^\pi$.
\begin{definition}
  \label{def:value_as_pushforward}
  The value distribution function is defined as
  \begin{equation}
    \label{eq:value_pushforward}
    \mu^\pi = v^\pi_{\#}\Phi.
  \end{equation}
\end{definition}
From \cref{def:value_as_pushforward} we can already derive a simple (albeit, inefficient
and computationally expensive) sample-based algorithm to estimate $\mu^\pi$: sample from
the posterior $\Phi$ and compute the value function \cref{eq:standard_value} for each
sample, which results in samples from $\mu^\pi$. However, our main goal in this paper is
to find a recursive definition of $\mu^\pi$ such that we can introduce a simple, yet
efficient estimation algorithm.

The main challenge in establishing a recursion for learning $\mu^\pi$ is the dependency
between $P(s' \mid s,a)$ and $V^\pi(s')$ in \cref{eq:distributional_bellman}. We
side-step this issue by restricting our study to a family of MDPs under which these
random variables are independent, similar to previous work
\citep{odonoghue_uncertainty_2018, luis_model-based_2023}. All the results that follow
in this section hold under:
\begin{assumption}[Parameter Independence \citep{dearden_model_1999}]
    \label{assumption:transitions}
    The posterior over the random vector $P(\cdot \mid s, a)$ is independent for each
    pair $(s, a) \in \mathcal{S} \times \mathcal{A}$.
\end{assumption}
\begin{assumption}[Directed Acyclic MDP \citep{odonoghue_uncertainty_2018}]
    \label{assumption:acyclic}
    Let $\tilde{p} \in \mathscr{P}$ be a realization of the random variable $P$. Then,
    the MDP $\tilde{\mathcal{M}}$ with transition function $\tilde{p}$ is a directed
    acyclic graph, i.e., states are not visited more than once in any finite trajectory.
\end{assumption}
\begin{assumption}[Terminal State]
    \label{assumption:truncated}
    Define a terminal (absorbing) state $s_T$ such that $r(s_T, a) = 0$ and $p(s_T \mid
    s_T, a) = 1$ for any $a \in \mathcal{A}$ and $p \in \mathscr{P}$. Let $\tilde{p} \in
    \mathscr{P}$ be a realization of the random variable $P$. Then, the MDP
    $\tilde{\mathcal{M}}$ with transition function $\tilde{p}$ deterministically
    transitions to $s_T$ after a finite horizon $H \leq \abs{\mathcal{S}}$, i.e.,
    $\tilde{p}(s_T \mid s_H, a) = 1$ for any $a \in \mathcal{A}$.
\end{assumption}
The consequence of these assumptions is that $P(s' \mid s, a)$ and $V^\pi(s')$ are
conditionally independent for all triplets $(s, a, s')$ (see
\cref{lemma:independence}). \Cref{assumption:transitions} is satisfied when modelling
state transitions as independent categorical random variables for every pair $(s, a)$,
with the unknown parameter vector $P(\cdot \mid s, a)$ under a Dirichlet prior
\citep{dearden_model_1999}. \Cref{assumption:acyclic,assumption:truncated} imply
that any infinite trajectory is composed of \emph{distinct} states for the first $H$
steps and then remains at the terminal state $s_T$ indefinitely. Our theoretical results
do not hold in the general case of MDPs with cycles. However, one may still obtain
reasonable approximations by considering an equivalent time-inhomogeneous MDP without
cycles (also known as the ``unrolled'' MDP \citep{odonoghue_uncertainty_2018}) by
forcing a transition to a terminal state after $H$ steps (see \cref{app:unrolling} for
an example). The approximation improves as $H \to \infty$, but in the limit implies an
infinite state space, which would then require additional measure-theoretic
considerations outside the scope of this work \citep[cf.][Remark
2.3]{bellemare_distributional_2023}. Despite these limitations, in
\cref{sec:quantile_regression} we empirically show that the algorithm stemming from our
theory yields reasonable esimates of the value distribution $\mu^\pi$ in MDPs
\emph{with} cycles.

Beyond tabular representations of the transition function, introducing function
approximation violates \cref{assumption:transitions} due to generalization of the model
\citep{odonoghue_uncertainty_2018,zhou_deep_2020,derman_bayesian_2020}. However, in
\cref{sec:experiments} our approach demonstrates strong empirical performance when
paired with neural networks for function approximation.

We want to highlight that, under our assumptions, the \emph{mean} value function
$\E\bracket{V^\pi}$ corresponds exactly to the value function under the mean of the
posterior $\Phi$, denoted $\bar{p}$. That is, $\E\bracket{V^\pi} = v^{\pi, \bar{p}}$
\citep{luis_model-based_2023}. If our ultimate goal is to estimate the mean of
$\mu^\pi$, then standard approaches to approximately solve the Bellman expectation
equation suffice. However, in this paper we motivate the need for the distributional
approach as it allows to flexibly specify policy objectives \emph{beyond} maximizing the
mean of the values. For instance, in \cref{sec:experiments} we explore the performance
of optimistic value objectives.

To establish a Bellman-style recursion defining the value distribution function, we
use the notion of pushforward measure from \cref{def:pushforward}. In particular,
we are interested in the pushforward of the value distribution by the bootstrap function
in \cref{eq:distributional_bellman}. First, for any MDP with transition function $p
\in \mathscr{P}$, we denote by $p^\pi: \mathcal{S} \to \mathcal{P}(\mathcal{S})$ the
transition function of the Markov Reward Process (MRP) \emph{induced} by policy $\pi$,
defined by $p^\pi(s' \mid s) = \sum_a \pi(a \mid s) p(s' \mid s, a)$. Further, it is
convenient to adopt the matrix-vector notation of the standard Bellman equation:
$\mathbf{v}^{\pi,p} = \mathbf{r}^\pi + \gamma \mathbf{p}^\pi \mathbf{v}^{\pi,p}$, where
$\mathbf{r}^\pi \in \R^{\mathcal{S}}$, $\mathbf{v}^{\pi,p} \in \R^{\mathcal{S}}$ are
vectors and $\mathbf{p}^\pi \in \R_{[0, 1]}^{\mathcal{S} \times \mathcal{S}}$ is a
so-called stochastic matrix whose entries are restricted to $[0, 1]$ and whose rows sum
up to $1$, i.e., such that it represents the transition function $p^\pi$. Then, we
define the bootstrap function $b_{r,p,\gamma} : \R^{\mathcal{S}} \to \R^{\mathcal{S}}$
applied to value vectors:
\begin{equation}
  \label{eq:bootstrap_vector}
  b_{r,p,\gamma}: \mathbf{v} \to \mathbf{r} + \gamma \mathbf{p} \mathbf{v},
\end{equation}
for an arbitrary $\mathbf{r} \in \R^{\mathcal{S}}$, $\mathbf{p} \in \R_{[0,
1]}^{\mathcal{S} \times \mathcal{S}}$ and $\gamma \in [0, 1)$. Applying $b_{r,p,\gamma}$
is a combination of adding $\mathbf{r}$ to a $\gamma$-scaled linear transformation of
the input vector. Further, we express mixtures with weights given by the posterior
$\Phi(P \mid \mathcal{D})$ more compactly with the notation\footnote{Adapted from
\citet{bellemare_distributional_2023}. It refers to a mixture distribution and must not
be mistaken by an expected value, which is a scalar.} $\E_P[\cdot]$, where the argument
is a probability distribution depending on $P$. Given the pushforward and mixture
operations, we can now propose a Bellman equation for the value distribution function
$\mu^\pi$.
\begin{restatable}[Value-Distribution Bellman Equation]{lemma}{bellmandist}
  \label{lemma:bellman_dist_assumptions}
  The value distribution function $\mu^\pi$ obeys the Bellman equation.
  \begin{equation}
    \label{eq:distributional_bellman_assumptions}
    \mu^\pi = \E_{P}\bracket{(b_{r^\pi, P^\pi, \gamma})_{\#}\mu^\pi}
  \end{equation}
  for any policy $\pi$.
\end{restatable}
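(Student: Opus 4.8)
The plan is to rewrite both sides as the laws of explicit random vectors and then collapse the claimed identity to the conditional independence granted by \cref{lemma:independence}. First I would unfold the right-hand side: by \cref{def:pushforward} together with the mixture convention for $\E_P[\cdot]$, the measure $\E_{P}\bracket{(b_{r^\pi, P^\pi, \gamma})_{\#}\mu^\pi}$ is the law of the random vector $b_{r^\pi, P^\pi, \gamma}(\mathbf{W}) = \mathbf{r}^\pi + \gamma \mathbf{P}^\pi \mathbf{W}$, where $P \sim \Phi$ and $\mathbf{W} \sim \mu^\pi$ are drawn \emph{independently}. For the left-hand side, \cref{def:value_as_pushforward} identifies $\mu^\pi$ with the law of $\mathbf{V}^\pi = v^\pi(\cdot, P)$ for $P \sim \Phi$, and \cref{prop:rand_var_bellman} rewrites this, in the matrix-vector notation of \cref{eq:bootstrap_vector}, as the law of $b_{r^\pi, P^\pi, \gamma}(\mathbf{V}^\pi) = \mathbf{r}^\pi + \gamma \mathbf{P}^\pi \mathbf{V}^\pi$. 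The decisive difference is that here the \emph{same} draw of $P$ enters both the stochastic matrix $\mathbf{P}^\pi$ and the value vector $\mathbf{V}^\pi$, since $\mathbf{V}^\pi = v^\pi(\cdot, P)$. The entire content of the lemma is therefore that this coupling between $\mathbf{P}^\pi$ and $\mathbf{V}^\pi$ may be replaced by independence without altering the pushforward law.

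To carry out that replacement I would argue componentwise. The entry at state $s$ of either bootstrap output is $r^\pi(s) + \gamma \sum_{s'} P^\pi(s' \mid s) V^\pi(s')$, where the row $P^\pi(\cdot \mid s) = \sum_a \pi(a \mid s) P(\cdot \mid s, a)$ depends on $P$ only through the transitions $\curly{P(\cdot \mid s, a)}_a$ out of $s$. By \cref{assumption:acyclic} every successor $s'$ with $P^\pi(s' \mid s) > 0$ lies strictly downstream of $s$ and cannot return to it, so $V^\pi(s')$ is a function only of transition rows of states reachable from $s'$, and the row at $s$ is not among them; \cref{assumption:transitions} then makes the row $P^\pi(\cdot \mid s)$ independent of the successor values entering the sum. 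This is exactly the conditional-independence statement of \cref{lemma:independence}. Hence, conditioning on $P$ and integrating against $\Phi$, the conditional law of $\sum_{s'} P^\pi(s' \mid s) V^\pi(s')$ is unchanged when the coupled vector $\mathbf{V}^\pi$ is swapped for an independent copy $\mathbf{W} \sim \mu^\pi$, which turns the coupled form of the left-hand side into the mixture on the right-hand side and closes the argument. A point to dispatch carefully is that $b$ acts on the whole vector, so $\mu^\pi$ must be read as the joint law of $\mathbf{V}^\pi$ on $\R^{\mathcal{S}}$ (rather than as the family of state marginals) for the pushforward to be well-defined; the componentwise independence above must therefore be upgraded to a joint statement over all coordinates, again using \cref{assumption:transitions,assumption:acyclic}.

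I expect the decoupling step to be the main obstacle: the random-variable Bellman equation of \cref{prop:rand_var_bellman} already holds in \emph{strict} equality, so nothing is gained there, and the real work is showing that the dependence of $\mathbf{V}^\pi$ on $P$ is invisible to the bootstrap map because the transitions it bootstraps against (row $s$) are disjoint from those determining the successor values. This is precisely what fails for general cyclic MDPs---where row $s$ can influence $V^\pi(s')$ through a returning path---and is the reason all three of \cref{assumption:transitions,assumption:acyclic,assumption:truncated} are needed, with \cref{assumption:truncated} guaranteeing the finite, well-founded topological order of states that makes the ``downstream'' reasoning and any supporting induction legitimate.
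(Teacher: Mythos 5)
Your proposal is correct and follows essentially the same route as the paper's proof: start from the random-variable Bellman equation in matrix-vector form, decompose the law of $\mathbf{r}^\pi + \gamma \mathbf{P}^\pi \mathbf{V}^\pi$ as a mixture over $P$ of conditional laws, and invoke the conditional independence of $P^\pi(s'\mid s)$ and $V^\pi(s')$ (\cref{lemma:independence}) to identify each conditional law with the pushforward $(b_{r^\pi,P^\pi,\gamma})_{\#}\mu^\pi$. Your explicit flagging that the componentwise independence must hold jointly over all coordinates for the vector-valued pushforward to be legitimate is a point the paper's proof passes over more quickly, but the underlying argument is the same.
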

\Cref{lemma:bellman_dist_assumptions} provides the theoretical backbone towards designing an
iterative algorithm for learning the value distribution function. In particular, the recursive
definition for $\mu^\pi$, which corresponds to a mixture of pushforwards of itself, leads to
efficient estimation methods by dynamic programming. Alternatively, we can also write the
value-distributional Bellman equation for each state $s \in \mathcal{S}$. With slight abuse of
notation, define $b_{r, \gamma}: \R \to \R$ as the map $v \to r + \gamma v$, then
\begin{equation}
  \label{eq:distributional_bellman_per_state}
  \mu^\pi(s) = \E_P \bracket{
    \sum_{s'}P^\pi(s' \mid s) (b_{r^\pi(s), \gamma})_{\#} \mu^\pi(s')
  }.
\end{equation}
Note that \cref{eq:value_pushforward} holds generally, while
\cref{eq:distributional_bellman_assumptions,eq:distributional_bellman_per_state} only
hold under \cref{assumption:acyclic,assumption:transitions,assumption:truncated}.
Moreover, the operator $\E_P[\cdot]$ is well-defined in
\cref{eq:distributional_bellman_assumptions,eq:distributional_bellman_per_state} since
$P^\pi(s' \mid s)$ is bounded in $[0, 1]$ for all $s,s' \in \mathcal{S}$
\citep{billingsley_probability_1995}

In \cref{fig:bellman_diagram} we illustrate the core operations involved in the value-distributional
Bellman recursion prescribed by \cref{eq:distributional_bellman_per_state}.

\begin{figure}
  \includegraphics[width=\textwidth]{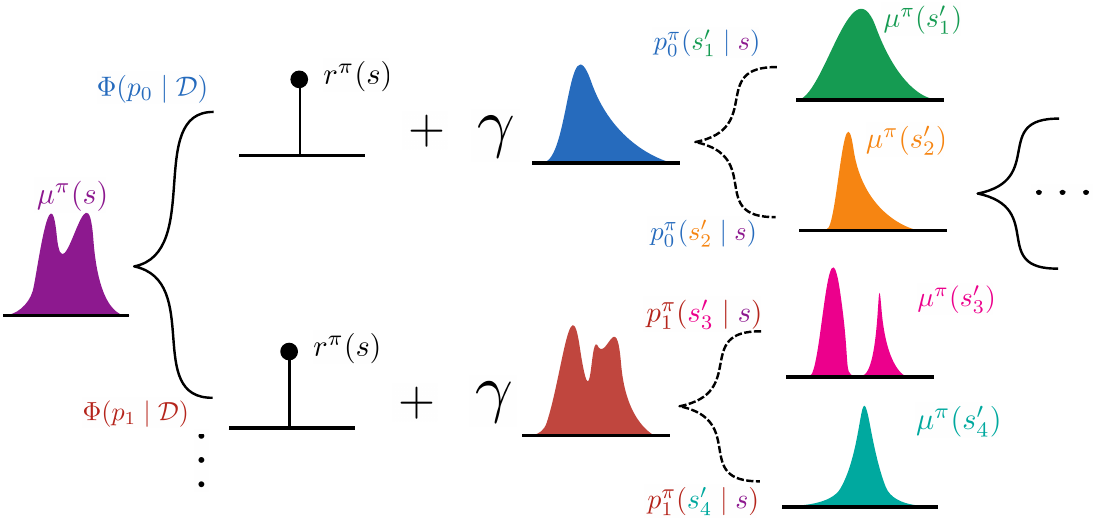}
  \caption{Visualization of the value-distributional Bellman backups, as prescribed by
  \cref{eq:distributional_bellman_per_state}. We identify four operations on
  distributions: infinite mixture over posterior transition functions (solid
  braces), shift by reward, scale by discount factor and mixture over next states
  (broken line braces)\protect\footnotemark. The main difference w.r.t the
  return-distributional backup \citep[cf.][Figure 2.6]{bellemare_distributional_2023} is
  the presence of the two distinct mixture operations.}
  \label{fig:bellman_diagram}
\end{figure}

\footnotetext{The pushforward operator $(b_{r, \gamma})_{\#}$ is linear \citep[cf.][Exercise
  2.13]{bellemare_distributional_2023}, so it can be moved outside the next-state mixture operation
  as depicted in the diagram.}

From \cref{eq:distributional_bellman_assumptions} we can extract an operator that acts on arbitrary
value distribution functions.
\begin{definition}
  The value-distributional Bellman operator $\mathcal{T}^\pi: \mathcal{P}(\R)^{\mathcal{S}} \to
  \mathcal{P}(\R)^{\mathcal{S}}$ is defined by 
  \begin{equation}
    \mathcal{T}^\pi \mu = \E_P\bracket{(b_{r^\pi, P^\pi, \gamma})_{\#}\mu}
  \end{equation}
\end{definition}
Intuitively, the operator $\mathcal{T}^\pi$ corresponds to mixing pushforward distributions, where
the pushforward itself involves shifting, scaling and linearly transforming the probability mass.
The natural question that follows is whether we can establish convergence to $\mu^\pi$ by repeated
applications of $\mathcal{T}^\pi$ starting from an arbitrary initial guess $\mu_0$.

Our convergence result is an adaptation of the standard distributional RL analysis by
\citet{bellemare_distributional_2023}. With some abuse of notation, we adopt the supremum
$p$-Wasserstein distance $\bar{w}_p$ to establish contractivity of the operator~$\mathcal{T}^\pi$
(see \cref{def:wasserstein} in \cref{app:theory}).
\begin{restatable}[]{theorem}{contraction}
  \label{thm:contraction}
  The operator $\mathcal{T}^\pi$ is a $\gamma$-contraction with respect to $\bar{w}_p$
  for all $p \in [1, \infty)$. That is, $\bar{w}_p(\mathcal{T}^\pi \mu, \mathcal{T}^\pi
  \mu') \leq \gamma \bar{w}_p(\mu, \mu')$ for all $\mu, \mu' \in
  \mathcal{P}(\R)^{\mathcal{S}}$ such that $V(s') \sim \mu(s')$, $V'(s') \sim
  \mu'(s')$ are conditionally independent of $P^\pi(s' \mid s)$ given $s' \in
  \mathcal{S}$.
\end{restatable}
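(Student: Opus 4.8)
The plan is to reduce the supremum-Wasserstein bound to a per-state bound and then exhibit an explicit coupling that realizes the factor $\gamma$. Since $\bar{w}_p(\mu,\mu') = \sup_{s} w_p(\mu(s),\mu'(s))$, it suffices to show $w_p((\mathcal{T}^\pi\mu)(s),(\mathcal{T}^\pi\mu')(s)) \le \gamma\,\bar{w}_p(\mu,\mu')$ for each fixed $s$ and then take the supremum over $s$ on the left, noting that the right-hand side is independent of $s$. I would work from the per-state form of the operator read off from \cref{eq:distributional_bellman_per_state}, namely $(\mathcal{T}^\pi\mu)(s) = \E_P[\sum_{s'} P^\pi(s'\mid s)\,(b_{r^\pi(s),\gamma})_{\#}\mu(s')]$.

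Next I would construct a coupling of $(\mathcal{T}^\pi\mu)(s)$ and $(\mathcal{T}^\pi\mu')(s)$ by sharing the randomness in $P$ and $s'$ and coupling only the value draws. Concretely: draw $P \sim \Phi$, then $s' \sim P^\pi(\cdot\mid s)$, and finally draw the pair $(V(s'),V'(s'))$ from an optimal $w_p$-coupling of $\mu(s')$ and $\mu'(s')$; then output $r^\pi(s)+\gamma V(s')$ and $r^\pi(s)+\gamma V'(s')$. The conditional-independence hypothesis is exactly what makes this a valid coupling: because $V(s')$ and $V'(s')$ are independent of $P^\pi(s'\mid s)$ given $s'$, the values may be resampled after conditioning on $s'$ without disturbing the marginals, so the two outputs have the correct laws $(\mathcal{T}^\pi\mu)(s)$ and $(\mathcal{T}^\pi\mu')(s)$.

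With the coupling in hand the cost estimate is short. The shift by $r^\pi(s)$ cancels and the scaling by $\gamma$ pulls out a factor $\gamma^p$, so $\E[\abs{(r^\pi(s)+\gamma V(s')) - (r^\pi(s)+\gamma V'(s'))}^p] = \gamma^p\,\E[\abs{V(s')-V'(s')}^p]$. Taking expectations over $P$ and $s'$, using optimality of the inner coupling so that $\E[\abs{V(s')-V'(s')}^p \mid s'] = w_p^p(\mu(s'),\mu'(s'))$, and bounding every term by $\bar{w}_p^p(\mu,\mu')$, I obtain $w_p^p((\mathcal{T}^\pi\mu)(s),(\mathcal{T}^\pi\mu')(s)) \le \gamma^p\,\bar{w}_p^p(\mu,\mu')\,\E_P[\sum_{s'}P^\pi(s'\mid s)]$. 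Since $\sum_{s'}P^\pi(s'\mid s)=1$ for every realization of $P$, the trailing factor is $1$; taking $p$-th roots and then the supremum over $s$ yields the claim.

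The main obstacle is justifying that the shared-randomness construction is a genuine coupling, i.e.\ that its two marginals really are $(\mathcal{T}^\pi\mu)(s)$ and $(\mathcal{T}^\pi\mu')(s)$. This is precisely where the conditional-independence restriction on $\mu,\mu'$ enters, and it is also what lets the mixture operator $\E_P[\cdot]$ factorize cleanly against the value draws; without it, $V(s')$ could depend on $P$ (as it does at the fixed point $\mu^\pi$, where $V^\pi(s')=v^{\pi,P}(s')$ is a deterministic function of $P$), and the marginals would fail to match. Everything else is standard: the affine-map identity $w_p((b_{r,\gamma})_{\#}\nu_1,(b_{r,\gamma})_{\#}\nu_2)=\gamma\,w_p(\nu_1,\nu_2)$ and the mixture-convexity inequality for $w_p^p$ are the same two Wasserstein properties used in the return-distributional analysis of \citet{bellemare_distributional_2023}; the coupling above applies both at once, with the two mixture layers (over $P$ and over $s'$) absorbed into a single outer expectation whose weights sum to one.
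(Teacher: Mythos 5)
Your overall skeleton (reduce to a per-state bound, exhibit a coupling, cancel the reward shift, pull out $\gamma$, take suprema) matches the paper's, but the coupling you construct realizes a different operator from the one the theorem is about, and this is a genuine gap. You sample a single next state $s'\sim P^\pi(\cdot\mid s)$ and share it between the two chains; that is the standard return-distributional argument, and its marginals are the \emph{mixtures} $\E_P[\sum_{s'}P^\pi(s'\mid s)(b_{r^\pi(s),\gamma})_{\#}\mu(s')]$ in the sense of convex combinations of measures. But $\mathcal{T}^\pi$ is defined through the vector-valued bootstrap $b_{r^\pi,P^\pi,\gamma}:\mathbf{v}\mapsto\mathbf{r}+\gamma\mathbf{P}\mathbf{v}$, so $(\mathcal{T}^\pi\mu)(s)$ is the law of the deterministic convex combination $r^\pi(s)+\gamma\sum_{s'}P^\pi(s'\mid s)V(s')$ over \emph{all} next states: the aleatoric average over $S'$ has already been taken, and the only remaining randomness is in $P$ and in the $V(s')$. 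The two laws differ in general (a mixture of point masses is not a point mass, whereas a convex combination of numbers is a number), and only the weighted-sum reading is consistent with the strict equality in \cref{prop:rand_var_bellman} and has $\mu^\pi$ as its fixed point. Admittedly, \cref{eq:distributional_bellman_per_state} and \cref{fig:bellman_diagram} ("mixture over next states") invite your reading, but the theorem and \cref{corollary:convergence} need the weighted-sum operator.

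The repair is the paper's actual argument, and it uses the conditional-independence hypothesis in a different place than you do. Keep the per-state optimal couplings $(V(s'),V'(s'))\sim\upsilon^\star_{s'}$ and form $\tilde V(s)=r^\pi(s)+\gamma\sum_{s'}P^\pi(s'\mid s)V(s')$ and likewise $\tilde V'(s)$; this pair is a coupling of $(\mathcal{T}^\pi\mu)(s)$ and $(\mathcal{T}^\pi\mu')(s)$. The cost estimate then requires two steps absent from your write-up: Jensen's inequality on the convex combination, $\abs{\sum_{s'}P^\pi(s'\mid s)(V(s')-V'(s'))}^p\le\sum_{s'}P^\pi(s'\mid s)\abs{V(s')-V'(s')}^p$, and the hypothesis that $V(s')$, $V'(s')$ are conditionally independent of $P^\pi(s'\mid s)$, which is what licenses the factorization $\E_P[P^\pi(s'\mid s)\abs{V(s')-V'(s')}^p]=\E_P[P^\pi(s'\mid s)]\,\E_P[\abs{V(s')-V'(s')}^p]$. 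Since the weights $\E_P[P^\pi(s'\mid s)]$ sum to one, bounding each factor $\E_P[\abs{V(s')-V'(s')}^p]$ by $\bar{w}_p^p(\mu,\mu')$ gives the claim. In your version the independence only certifies the marginals of a shared-$s'$ coupling; in the correct argument it is the factorization step, which is where the hypothesis genuinely earns its place in the theorem statement.
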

\Cref{thm:contraction} parallels similar results in standard RL and model-free distributional RL,
in that it allows us to establish the convergence of iterated applications of $\mathcal{T}^\pi$ and
characterize the operator's fixed-point.

\begin{restatable}[]{corollary}{convergence}
  \label{corollary:convergence}
  Denote the space of value distribution functions with bounded support\footnote{Under
  bounded reward functions, the corresponding value distributions have bounded support.
  The corollary can be relaxed to distributions with bounded moments (see Proposition
  4.30 in \citet{bellemare_distributional_2023}.)} by $\mathcal{P}_B(\R)^\mathcal{S}$.
  Given an arbitrary value distribution function $\mu_0 \in
  \mathcal{P}_B(\R)^\mathcal{S}$, the sequence $\set{\mu_k}_{k=0}^\infty$ defined by
  $\mu_{k+1} = \mathcal{T}^\pi\mu_k$ for all $k \geq 0$ is such that $\bar{w}_p(\mu_k,
  \mu^\pi) \leq \gamma^k\bar{w}_p(\mu_0, \mu^\pi) \to 0$ as $k \to \infty$ for $p \in
  [1, \infty)$. That is, $\mu^\pi$ is the unique fixed-point of the operator
  $\mathcal{T}^\pi$.
\end{restatable}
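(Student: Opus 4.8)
The plan is to read this corollary as the Banach fixed-point theorem specialized to the metric space $(\mathcal{P}_B(\R)^{\mathcal{S}}, \bar{w}_p)$, with \cref{thm:contraction} supplying the contraction constant $\gamma$ and \cref{lemma:bellman_dist_assumptions} identifying the fixed point. Before running the iteration I would settle two prerequisites. First, $\mu^\pi$ is itself a fixed point: \cref{eq:distributional_bellman_assumptions} reads exactly $\mu^\pi = \E_P[(b_{r^\pi,P^\pi,\gamma})_{\#}\mu^\pi] = \mathcal{T}^\pi\mu^\pi$. Second, $\mathcal{T}^\pi$ leaves $\mathcal{P}_B(\R)^{\mathcal{S}}$ invariant and the initial distance is finite: since $\gamma<1$, the reward is bounded, and both the next-state and posterior mixtures preserve bounded support, every iterate $\mu_k$ is compactly supported, so $\bar{w}_p(\mu_0,\mu^\pi)<\infty$ (the per-state Wasserstein distances are finite between compactly-supported laws, and $\abs{\mathcal{S}}<\infty$ makes the supremum over states finite).

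With these in hand, the geometric bound is a one-line induction. Using that $\mu^\pi$ is a fixed point and applying \cref{thm:contraction} at each step, $\bar{w}_p(\mu_k,\mu^\pi)=\bar{w}_p(\mathcal{T}^\pi\mu_{k-1},\mathcal{T}^\pi\mu^\pi)\leq\gamma\,\bar{w}_p(\mu_{k-1},\mu^\pi)$, which unrolls to $\bar{w}_p(\mu_k,\mu^\pi)\leq\gamma^k\bar{w}_p(\mu_0,\mu^\pi)$. As the initial distance is finite and $\gamma^k\to 0$, the right-hand side vanishes, giving $\bar{w}_p(\mu_k,\mu^\pi)\to 0$. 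Uniqueness then follows from the same contraction: if $\mu^\star=\mathcal{T}^\pi\mu^\star$ is any fixed point, then $\bar{w}_p(\mu^\pi,\mu^\star)\leq\gamma\,\bar{w}_p(\mu^\pi,\mu^\star)$ forces $\bar{w}_p(\mu^\pi,\mu^\star)=0$ since $\gamma<1$, hence $\mu^\star=\mu^\pi$.

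The step I expect to be the real obstacle is not the Banach machinery but the verification that every iterate lies in the sub-class on which \cref{thm:contraction} is actually valid, namely pairs of value distribution functions whose values $V(s')$ are conditionally independent of $P^\pi(s'\mid s)$ given $s'$. Because $\mu_0$ is a fixed map $\mathcal{S}\to\mathcal{P}(\R)$ that does not depend on $P$, its values can be coupled independently of the posterior, and I would argue inductively that $\mathcal{T}^\pi$ preserves this structure: by \cref{assumption:acyclic} the value at a state $s'$ is determined only by transitions out of its descendants, while $P^\pi(\cdot\mid s)$ at a predecessor $s$ involves a disjoint block of the posterior that is independent by \cref{assumption:transitions}. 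This is precisely the independence exploited in \cref{lemma:independence}, now carried along the iteration, and confirming it for each $\mu_k$ (and, in the limit, for $\mu^\pi$) is the delicate part; once the iterates are known to stay in the domain of the contraction, the convergence and uniqueness claims are immediate.
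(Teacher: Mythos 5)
Your proposal is correct and follows essentially the same route as the paper's proof: invoke \cref{thm:contraction} along the sequence of pairs $(\mu_k,\mu^\pi)$, use \cref{lemma:bellman_dist_assumptions} to identify $\mu^\pi$ as a fixed point, and verify that the iterates stay in the domain where the contraction applies (bounded support, handled in the paper by \cref{lemma:bounded_support}, and the conditional-independence property of each $\mu_k$, handled by \cref{lemma:indep_property} via exactly the inductive argument you sketch). The only cosmetic difference is that you derive uniqueness directly from the contraction inequality rather than citing Banach's fixed-point theorem wholesale.
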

\begin{proof}
  We wish to apply \cref{thm:contraction} on the sequence of pairs $\set{(\mu_k,
  \mu^\pi)}_{k=0}^\infty$. The conditional independence assumption required to apply
  \cref{thm:contraction} holds for $\mu^\pi$ (see \cref{lemma:independence}), but it is
  straightforward to show it also holds (under
  \cref{assumption:transitions,assumption:acyclic,assumption:truncated}) for all the
  elements of the sequence $\set{\mu_k}_{k=0}^\infty$ (see
  \cref{lemma:indep_property}). Further, since we consider bounded rewards, it follows
  immediately that $\mu^\pi \in \mathcal{P}_B(\R)^\mathcal{S}$. Moreover, it can be
  shown that the operator $\mathcal{T}^\pi$ maps $\mathcal{P}_B(\R)^\mathcal{S}$ onto
  itself, such that for arbitrary $\mu \in \mathcal{P}_B(\R)^\mathcal{S}$ then
  $\mathcal{T}^\pi\mu \in \mathcal{P}_B(\R)^\mathcal{S}$ (see
  \cref{lemma:bounded_support}). By \cref{thm:contraction}, $\mathcal{T}^\pi$ is then a
  \emph{contraction mapping} and by Banach's fixed-point theorem $\mathcal{T}^\pi$
  admits a unique fixed-point which is the limiting value of the sequence
  $\set{\mu_k}_{k=0}^\infty$. Since $\mu^\pi = \mathcal{T}^\pi \mu^\pi$ holds by
  \cref{lemma:bellman_dist_assumptions}, then $\mu^\pi$ must be the unique fixed-point
  of $\mathcal{T}^\pi$.
\end{proof}
In summary, \cref{corollary:convergence} establishes that repeated applications of $\mathcal{T}^\pi$
from an arbitrary initial guess converges to the value distribution function $\mu^\pi$. Inspired by
this theoretical result, in the remaining sections we introduce and evaluate a practical algorithm
for learning the value distribution function.

\section{Quantile-Regression for Value-Distribution Learning}
\label{sec:quantile_regression}
In the previous section we described an iterative process that converges to $\mu^\pi$
starting from an arbitrary value distribution with bounded support. In practice,
however, to implement such a recursion we must project the value distributions onto some
finite-dimensional parameter space. Following the success of quantile distributional RL
\citep{dabney_distributional_2018}, we adopt the quantile parameterization. Let
$\mathcal{V}_m$ be the space of quantile distributions with $m$ quantiles and
corresponding quantile levels $\tau_i = 1/m$ for $i = \set{1, \dots, m}$ and $\tau_0 =
0$. Define a parametric model $q: \mathcal{S} \to \R^m$, then the quantile distribution
$\mu_q \in \mathcal{V}_m$ maps states to a uniform probability distribution supported on
$q_i(s)$. That is, $\mu_q(s) = \frac{1}{m}\sum_{i=1}^m \delta_{q_i(s)}$, where
$\delta_x$ denotes the Dirac delta distribution centered at $x \in \R$, such that
$\mu_q(s)$ is a uniform mixture of Dirac deltas where the particle $q_i(s)$ corresponds
to the $\tau_i$-quantile at state $s$.  With this parameterization, our aim now becomes
to compute the so-called quantile projection of $\mu^\pi$ onto $\mathcal{V}_m$, given by
\begin{equation}
  \label{eq:argmin_wasserstein}
  \Pi_{w_1}\mu^\pi := \argmin_{\mu_q \in \mathcal{V}_m} w_1(\mu^\pi, \mu_q),
\end{equation}
where $w_1$ is the $1$-Wasserstein distance. Define $F^{-1}_{\mu^\pi}$ as the inverse cumulative
distribution function of $\mu^\pi$, then the distance metric becomes
\begin{equation}
  w_1(\mu^\pi, \mu_q) = \sum_{i = 1}^m \int_{\tau_{i-1}}^{\tau_i} \abs{F^{-1}_{\mu^\pi}(\omega) - q_i}d\omega,
\end{equation}
since $\mu_q$ is a uniform distribution over $m$ Dirac deltas with support $\set{q_1,
\dots, q_m}$. Let $\hat{\tau}_i = (2i - 1) / 2m$, then a valid minimizer of
\cref{eq:argmin_wasserstein} exists and is achieved by selecting $q_i =
F^{-1}_{\mu^\pi}(\hat{\tau}_i)$ \citep[cf.][Lemma 2]{dabney_distributional_2018}. In
summary, quantile projection as defined in \cref{eq:argmin_wasserstein} is equivalent
to estimating each $\hat{\tau}_i$-quantile of $\mu^\pi$.

\begin{figure}
  \includegraphics{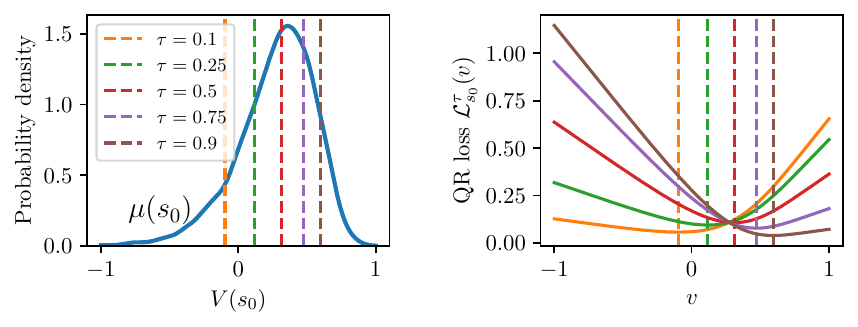}
  \caption{Quantile-regression loss for the example MDP of \Cref{fig:example_value_distribution}. \textbf{(Left)} Probability density of values for state $s_0$, with five quantile levels in colored vertical lines. \textbf{(Right)} The quantile regression loss \cref{eq:quantile_regression_loss} for the five quantile levels; the vertical lines correspond to the minimum of the color-matching loss. The vertical lines on both plots match upto numerical precision, meaning that following the gradient of such a convex loss function would indeed converge to the quantile projection $\Pi_{w_1}\mu$.}
  \label{fig:qr_loss}
\end{figure}

We follow closely the treatment by \citet{rowland_analysis_2023} of quantile-regression
temporal-difference learning for return-distributions and adapt it to instead work on
value-distributions. The following loss function corresponds to the quantile-regression
problem of estimating the $\tau$-quantile of the value distribution $\mu^\pi$:
\begin{equation}
  \label{eq:quantile_regression_loss}
  \mathcal{L}^{\tau, \pi}_s(v) = \E_P\bracket{
    \paren{
      \tau \mathds{1}\curly{V^\pi(s) > v} + (1 - \tau) \mathds{1} \curly{V^\pi(s) < v}
    } \abs{V^\pi(s) - v}
  }.
\end{equation}
It is an asymmetric convex loss function, where quantile overestimation and underestimation errors
are weighted by $\tau$ and $1-\tau$, respectively. The unique minimizer of this loss is the
$\tau$-quantile of $\mu^\pi$, which we illustrate with an example in \Cref{fig:qr_loss}.

Our goal is to propose a practical algorithm to learn the value distribution function based on the
quantile-regression loss \cref{eq:quantile_regression_loss}. If we have access to samples of
$V^\pi(s)$, denoted $\tilde{v}^\pi(s)$, then we can derive an unbiased estimate of the negative gradient of \cref{eq:quantile_regression_loss} and obtain the update rule
\begin{equation}
  q_i(s) \leftarrow q_i(s) + \alpha \paren{
    \tau_i - \mathds{1}\curly{\tilde{v}^\pi(s) < q_i(s)}
  },
\end{equation}
where $\alpha$ is some scalar step size. One option to sample $V^\pi=\tilde{v}^\pi$ would be to
first sample a model $P=\tilde{p}$ and then solve the corresponding Bellman equation. Instead, we
use a computationally cheaper alternative (albeit biased) and bootstrap like in temporal-difference
learning, so that the samples are defined as
\begin{equation}
  \tilde{v}^\pi(s) = r^\pi(s) + \gamma\sum_{s'}\tilde{p}^\pi(s' \mid s)q_{J}(s'),
\end{equation}
where $J \sim \textnormal{Uniform}(1, \dots, m)$. Lastly, we reduce the variance of the gradient
estimate by averaging over the values of $J$, which results in the update
\begin{equation}
  \label{eq:qtd_update}
  q_i(s) \leftarrow q_i(s) + \frac{\alpha}{m} \paren{
    \tau_i -\sum_{j=1}^m \mathds{1}\set{r^\pi(s) + \gamma \sum_{s'}\tilde{p}^\pi(s' \mid s) q_j(s') < q_i(s)}
  }.
\end{equation}

We introduce EQR in \Cref{algorithm:value_quantile_learning} to iteratively learn the value
distribution function $\mu^\pi$. From an arbitrary initial guess of quantiles, we sample an MDP from
the posterior and update the quantiles using \cref{eq:qtd_update} for all states until convergence.
The following examples illustrate the performance of EQR in tabular problems.
\begin{algorithm}[t]
   \caption{Epistemic Quantile-Regression (EQR)}
   \label{algorithm:value_quantile_learning}
\begin{algorithmic}[1]
  \STATE {\bfseries Input:} Posterior MDP $\Phi$, policy $\pi$, number of quantiles $m$.
  \STATE Randomly initialize estimates $\set{q_i(s)}_{i=1}^m$ for all $s \in \mathcal{S}$
  \REPEAT
    \STATE Sample $\tilde{p}$ from posterior $\Phi$
    \FOR{$i = 1, \dots, m$}
      \STATE Update $q_i(s)$ with \cref{eq:qtd_update} for all $s \in \mathcal{S}$
    \ENDFOR
  \UNTIL convergence
  \RETURN $\set{q_i(s)}_{i=1}^m$
\end{algorithmic}
\end{algorithm}

\begin{figure}[t]
  \includegraphics{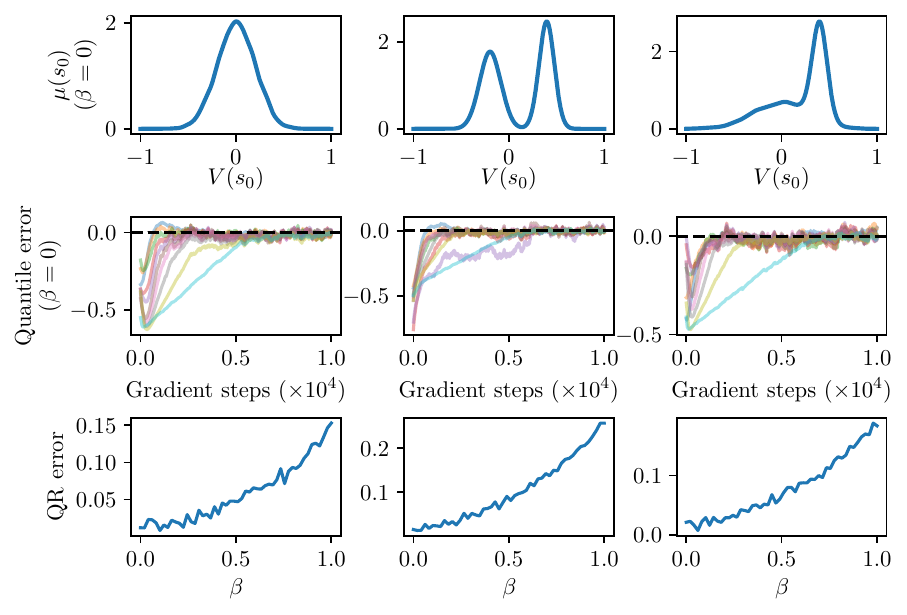}
  \caption{Performance of quantile-regression for value-distribution learning in the example MDP of
  \Cref{fig:example_value_distribution}. The parameter $\beta$ controls the covariance between
  $V(s_0)$ and $P(s_2 | s_0)$; the covariance increases with $\beta$ and is zero for $\beta=0$.
  \textbf{(Top)} Value distributions (Gaussian, bi-modal and heavy-tailed) generated by different
  prior distributions of the parameter $\delta$. \textbf{(Middle)} Evolution of the per-quantile
  estimation error $(\Pi_{w_1}\mu(s_0) - \mu_q(s_0))$ between the true quantile projection and
  the prediction; for $\beta=0$, our algorithm oscillates around the true quantile
  projection. \textbf{(Bottom)} $1$-Wasserstein metric between the true quantile projection and the
  estimate $\mu_q$ after $10^4$ gradient steps, as a function of the correlation parameter
  $\beta$. As $\beta$ moves from zero to one, the regression error increases and the algorithm no
  longer converges to the true quantiles, although the error is relatively small.}
  \label{fig:qr_examples}
\end{figure}

\begin{example}[Toy MDP]
  \label{ex:toy_mdp}
  Consider once more the tabular MDP of \cref{fig:example_value_distribution}. Our goal
  is to assess the convergence properties of EQR both when
  \cref{assumption:acyclic,assumption:transitions,assumption:truncated} hold, but
  also when they are violated \footnote{In order to be closer to standard settings,
  when the assumptions are violated (i.e., MDP contains cycles) we do not unroll the MDP
  as described in \cref{app:unrolling}.}. We control the degree of violation of
  \cref{assumption:acyclic,assumption:truncated} via the parameter $\beta$ of the MDP.
  If $\beta=0$, the assumptions hold and $V(s_0)$ and $P(s_2 | s_0)$ are decorrelated.
  As $\beta$ goes from zero to one, the covariance between these two random variables
  increases monotonically. We manually design three MDP posterior distributions that
  result in diverse distributions for $V(s_0)$. The value distributions shown in the top
  row of \cref{fig:qr_examples} are the result of modelling the MDP parameter $X$ as the
  following mixtures of truncated Gaussian distributions:
  $X_{\text{left}}\sim\bar{\mathcal{N}}(\mu=0.5, \sigma=0.1)$,
  $X_{\text{middle}}\sim0.5\bar{\mathcal{N}}(\mu=0.3, \sigma=0.03) +
  0.5\bar{\mathcal{N}}(\mu=0.6, \sigma=0.05)$ and
  $X_{\text{right}}\sim0.5\bar{\mathcal{N}}(\mu=0.3, \sigma=0.03) +
  0.5\bar{\mathcal{N}}(\mu=0.5, \sigma=0.15)$. For this selection of value
  distributions, we run EQR to estimate $m=10$ quantiles. 
\end{example}

The middle row of \cref{fig:qr_examples} shows that, for $\beta=0$, the prediction error oscillates
close to zero for every quantile, thus validating the result of \cref{corollary:convergence}. To
test the prediction quality when the assumptions are violated, we generate different values for
$\beta \in [0, 1]$ and run EQR for the same three MDP posterior distributions. The bottom plots in
\cref{fig:qr_examples} show the 1-Wasserstein metric between the true and predicted quantile
distributions after $10^4$ gradient steps; the error, like the covariance between $V(s_0)$ and
$P(s_2 | s_0)$, increases monotonically with $\beta$. The prediction quality thus degrades depending
on the magnitude of the covariance between the transition kernel and the values.

\cref{ex:toy_mdp} is mostly pedagogical and serves the purpose of validating our theoretical result,
but it remains a contrived example with limited scope. The next example analyzes the performance of
EQR in a standard tabular problem.

\begin{example}[Gridworld]
  We consider a modification of the $N$-room gridworld environment by
  \citet{domingues_rlberry_2021}, consisting of three connected rooms of size $5 \times 5$. The task
  for this example is to predict $m=100$ quantiles of the value distribution under the optimal
  policy $\pi^\star$ (obtained by running any standard tabular exploration algorithm, like PSRL
  \citep{osband_more_2013}). We use a Dirichlet prior for the transition kernel and a standard
  Gaussian for the rewards. We collect data using $\pi^\star$, update the posterior MDP and run EQR
  to predict the value distribution. 
\end{example}

In \cref{fig:nroom_quantile_estimation}, we summarize the results at three
different points during data collection. As more data is collected, the
corresponding MDP posterior shrinks and we observe the value distribution
concentrates around the value under the true dynamics (dotted vertical line).
For both wide (episode 1) and narrow (episode 100) posteriors, EQR is able to
accurately predict the distribution of values. The impact of violating
\cref{assumption:acyclic} is the non-zero steady-state quantile-regression
error. We observe the bias of the predicted quantiles is typically lowest (near
zero) close to the median and highest at the extrema.

\begin{figure}[t]
  \includegraphics{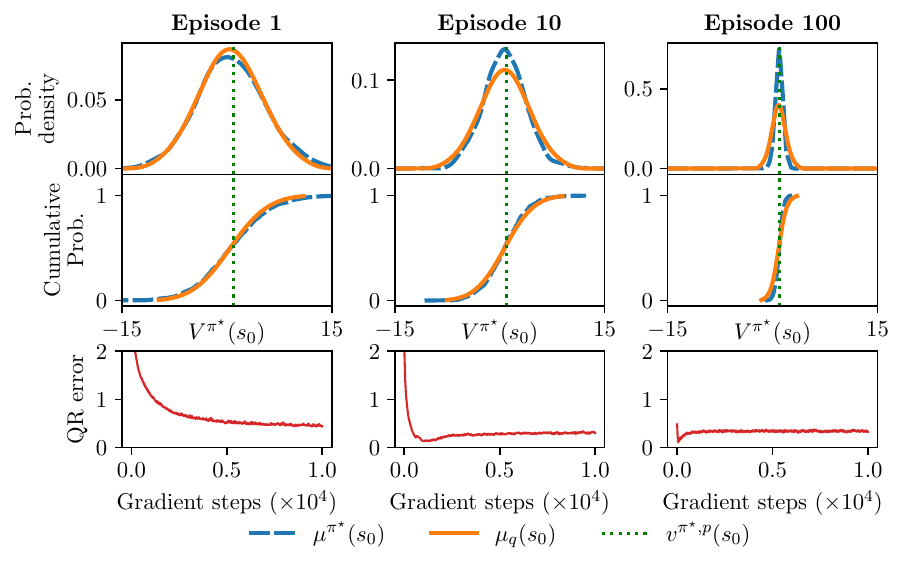}
  \caption{Performance of EQR in a Gridworld environment. We train the optimal policy $\pi^\star$
  using PSRL \citep{osband_more_2013} and then use it for data collection. At different points
  during data collection, we run EQR to estimate $m=100$ quantiles of the value distribution for the
  initial state under $\pi^\star$, given the current posterior MDP. \textbf{(Top-Middle)} PDF and
  CDF of the true (dashed blue) and predicted (solid orange) value distribution, with the true
  optimal value (dotted green) as vertical reference. \textbf{(Bottom)} $1$-Wasserstein distance
  between the true quantile projection and the prediction.}
  \label{fig:nroom_quantile_estimation}
\end{figure}

\section{Policy Optimization with Value Distributions}
In this section, we propose an algorithm to optimize a policy given some
differentiable utility function $f$ of the learned quantile distribution $\mu_q$ (which is
implicitly parameterized by $\pi$). Namely, we define the optimal policy
\begin{equation}
  \label{eq:objective}
  \pi^\star = \argmax_\pi f(\mu_q;\pi).
\end{equation}
To approximately solve \cref{eq:objective}, we combine EQR with SAC (EQR-SAC) to obtain a
model-based reinforcement learning algorithm that leverages a value-distributional critic for policy
optimization. Our algorithm is agnostic to $f$ as long as it is differentiable and thus can
backpropagate gradients through it. The key ingredients of our method are: (1) an ensemble-based
posterior over MDPs, (2) a quantile-distributional critic network that models the $m$-quantile
function $q(s,a)$ and (3) a policy network $\pi_\phi$ trained to optimize \cref{eq:objective}.
A full algorithmic description of EQR-SAC is included in \cref{app:implementation_details}.

\paragraph{Posterior Dynamics.} We adopt the baseline architecture from MBPO
\citep{janner_when_2019} and the implementation from \citet{pineda_mbrl-lib_2021}, where the
posterior MDP, denoted $\Gamma_\psi$, is represented as an ensemble of $n$ neural networks trained
via supervised learning on the environment dataset $\mathcal{D}$ to predict the mean and variance of
a Gaussian distribution over next states and rewards. We use $\Gamma_\psi$ to populate an experience
replay buffer $\mathcal{D}_{\text{model}}$ with model-consistent $k$-step rollouts; that is, we use
a consistent ensemble member throughout a rollout, rather than randomizing the model per-step like
in MBPO. 

\paragraph{Critic.} We train the critic on mini-batches drawn from $\mathcal{D}_{\text{model}}$,
use the entropy regularization loss from SAC with temperature $\alpha$ and replace the quantile
regression loss with the quantile Huber loss $\rho_\tau(u)$ from \citet{dabney_distributional_2018}
(see \cref{subsec:quantile_huber_loss})
\begin{equation}
  \label{eq:critic_loss}
  \mathcal{L}_{\text{critic}}(q) = \E_{(S,A) \sim \mathcal{D_{\text{model}}}}\bracket{
    \E_{(\hat{R}, \hat{P}) \sim \Gamma_\psi} \bracket{
      \sum_{i=1}^m \E_J \bracket{
        \rho_{\tau_i}\paren{\mathcal{T}q_J(S,A) - q_i(S, A)}
      }
    }
  },
\end{equation}
where the target quantiles $\mathcal{T}q_j$ are defined as
\begin{equation}
  \label{eq:target_quantile}
  \mathcal{T}q_j(s, a) = \hat{R}(s, a) + \gamma \E_{(S', A') \sim \hat{P}(\cdot \mid s,a), \pi_\phi} \bracket{q_j(S',A') - \alpha \log \pi_\phi(A' \mid S')}.
\end{equation}
The expectation in \cref{eq:target_quantile} is approximated by generating transition tuples $(s',
a')$ using the policy and the sampled dynamics from $\Gamma_\psi$. Typically, model-based algorithms
like MBPO only use data in the mini-batch to compose critic targets, rather than leveraging the
learned dynamics model for better approximation of expectations.

\paragraph{Actor.} The policy is trained to maximize the objective in \cref{eq:objective}, in addition to the entropy regularization term from SAC,
\begin{equation}
  \label{eq:actor_loss}
  \mathcal{L}_{\text{actor}}(\phi) = \E_{S \sim {\mathcal{D}_{\text{model}}}} \bracket{
    \E_{A \sim \pi_\phi}\bracket{ 
      f(q(S, A)) - \alpha \log \pi_\phi(A \mid S)
    }
  }.
\end{equation}
Let $\bar{q}(s,a)$ and $\sigma_q(s,a)$ be the mean and standard deviations of the
quantiles, respectively. Then, we consider two concrete utility functions: the classical mean
objective $f_{\text{mean}}(q(s,a)) = \bar{q}(s,a)$ and an objective based on optimism in
the face of uncertainty $f_{\text{ofu}} = \bar{q}(s,a) + \sigma_q(s,a)$.

\section{Experiments}
\label{sec:experiments}
In this section, we evaluate EQR-SAC in environments with continuous state-action spaces.
Implementation details and hyperparameters are included in
\cref{app:implementation_details,app:hparams}, respectively. Unless noted otherwise, all training
curves are smoothened by a moving average filter and we report the mean and standard error over 10
random seeds.

\subsection{Baselines}
We consider the following baselines, which all share a common codebase and hyperparameters.

\textbf{SAC} with typical design choices like target networks \citep{mnih_playing_2013}, clipped
double Q-learning \citep{fujimoto_addressing_2018} and automatic entropy tuning
\citep{haarnoja_soft_2019}.

\textbf{MBPO} with slight modifications from \citet{janner_when_2019}: (1) it only uses
$\mathcal{D}_{\text{model}}$ to update the actor and critic, rather than mixing in data from
$\mathcal{D}$; (2) it uses a fixed rollout length $k$, instead of an adaptive scheme. With respect
to EQR-SAC, MBPO collects data differently: instead of collecting $k$-step rollouts under each model
of the ensemble, it does so by uniformly sampling a new model per step of the rollout. 

\textbf{QR-MBPO}, which replaces the critic in MBPO with a quantile-distributional critic,
trained on the standard quantile-regression loss from \citet{wurman_outracing_2022}, but using data
from $\mathcal{D}_{\text{model}}$,
\begin{equation}
  \label{eq:critic_loss_qrmbpo}
  \mathcal{L}_{\text{critic}}^{\text{qrmbpo}}(q) = \E_{(S,A,S',R) \sim \mathcal{D_{\text{model}}}}\bracket{
    \bracket{
      \sum_{i=1}^M \E_J \bracket{
        \rho_{\tau_i}\paren{\mathcal{T}q^{\text{qrmbpo}}_J(R,S,A) - q_i(S, A)}
      }
    }
  },
\end{equation}
where the target quantile is defined as
\begin{equation}
  \mathcal{T}q_j^{\text{qrmbpo}}(r, s', a) = r + \gamma\paren{q_j(s', a') - \alpha \log \pi_\phi(a' \mid s')},
\end{equation}
and $a' \sim \phi_\phi(a' \mid s')$. Importantly, \cref{eq:critic_loss_qrmbpo} differs from
\cref{eq:critic_loss} in that the former captures both the aleatoric and epistemic uncertainty
present in $\mathcal{D}_{\text{model}}$, while the latter aims to average out the aleatoric noise
from the target quantiles. The objective function for the actor is the same as EQR-SAC.

\paragraph{QU-SAC,} as proposed by \citet{luis_model-based_2023}. It collects data as in EQR-SAC,
but stores the $n$ model-consistent rollouts in $n$ separate buffers (while EQR-SAC uses a single
buffer). Then it trains an ensemble of $n$ standard critics on the corresponding $n$ model-buffers.
As such, it interprets the ensemble of critics as samples from the value distribution. The actor is
optimized to maximize the mean prediction of the critic ensemble.

\subsection{Case Study - Mountain Car}
\label{subsec:mountaincar}
We motivate the importance of learning a distribution of values with a simple environment, the
Mountain Car \citep{sutton_reinforcement_2018} with continuous action space, as implemented in the
\texttt{gym} library \citep{brockman_openai_2016}. The environment's rewards are composed of a small
action penalty and a large sparse reward once the car goes up the mountain, defined by a horizontal
position $x > 0.45$ meters. We consider three versions of the problem: the original one, and two
variants where all the rewards are scaled by a constant factor of 0.5 and 0.1, respectively.

\begin{figure}[t]
  \includegraphics{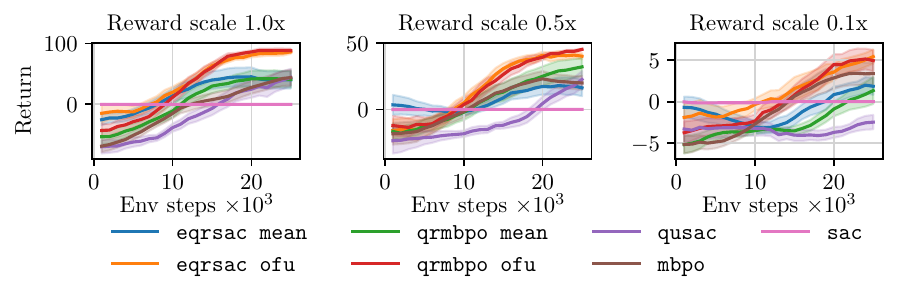}
  \caption{Performance in the Mountain Car environment. We consider the original version of the
  environment (left) and two variants (middle, right) that scale down the rewards by some factor
  (0.5 and 0.1, respectively).}
  \label{fig:main_mountaincar}
\end{figure}

\begin{figure}[t]
  \includegraphics{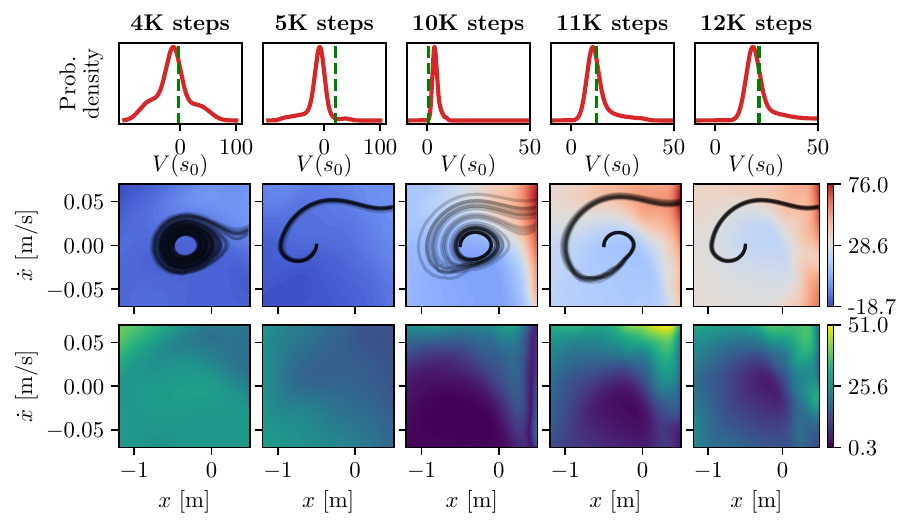}
  \caption{Visualization of the learned value distribution of EQR-SAC at different points during
  training in the Mountain Car environment (with reward scale of 0.5x). \textbf{(Top)} The predicted
  value distribution at the initial state. The dotted line is the empirical value of $s_0$ based on
  ten trajectories (black lines of middle row). \textbf{(Mid-Bottom)} The mean (mid) and standard
  deviation (bottom) of the value distribution across the state space.}
  \label{fig:viz_mountaincar}
\end{figure}

While it is low-dimensional and has simple dynamics, many RL algorithms fail to solve
the Mountain Car problem due to its combination of action penalties and sparse rewards.
Naive exploration strategies based on injecting unstructured noise, like SAC, typically
fail to solve such tasks \citep{raffin_smooth_2021}. We plot the performance of EQR-SAC
and all baselines in \cref{fig:main_mountaincar}. EQR-SAC and QR-MBPO have the best
overall performance, both using the optimistic objective function $f_{\text{ofu}}$
(as previously defined after \cref{eq:actor_loss}). These results highlight the
need to model uncertainty \emph{and} leverage it during optimization; optimizing the
mean values significantly degraded performance of the distributional approaches.

In \cref{fig:viz_mountaincar}, we inspect further the distribution of values learned by
EQR-SAC during a training run. The value distribution is initially wide and
heavy-tailed, as the agent rarely visits goal states. At 5K steps, the policy is
close-to-optimal but the predicted distribution underestimates the true values. In
subsequent steps, the algorithm explores other policies while reducing uncertainty and
calibrating the predicted value distribution. At 12K steps, the algorithm stabilizes
again at the optimized policy, but with a calibrated value distribution whose mean is
close to the empirical value. We notice the large uncertainty in the top-right corner of
the state space remains (and typically does not vanish if we run the algorithm for
longer); we hypothesize this is mainly an artifact of the discontinuous reward function,
which is smoothened out differently by each member of the ensemble of dynamics, such
that epistemic uncertainty stays high.

\begin{figure}[t]
  \includegraphics{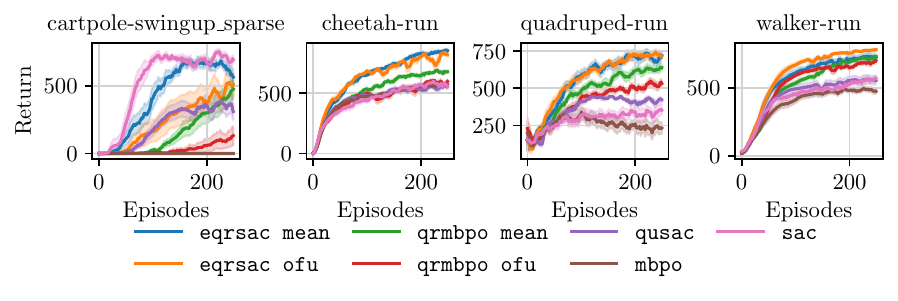}
  \caption{Performance in four DeepMind Control tasks. Cartpole swing-up has sparse rewards,
  while Cheetah, Quadruped and Walker have dense rewards. EQR-SAC significantly improves
  performance with respect to the model-based baselines.}
  \label{fig:main_dm_control}
\end{figure}

\subsection{DM Control Benchmark}
In order to evaluate EQR-SAC more broadly, we conduct an experiment in a subset of 16
continuous-control tasks from the DeepMind Control Suite \citep{tunyasuvunakool_dm_control_2020}.
The chosen environments include both dense/sparse rewards and smooth/discontinuous dynamics. In
\cref{fig:main_dm_control}, we plot the results for four environments ranging from small (cartpole)
to mid/large (quadruped) observation spaces. Our method significantly improves performance over
previous model-based algorithms in these environments. Moreover, in the full benchmark, EQR-SAC
achieves the best (or comparable) final performance in 13 out of 16 tasks (see
\cref{app:dm_control_scores}). We summarize the results of the DMC benchmark in
\cref{fig:agg_perf}, following the guidelines by \citet{agarwal_deep_2021}. At 250 episodes of
training, EQR-SAC OFU achieves the highest normalized IQM score, which is $\sim 17\%$ higher than
QR-MBPO mean (see \cref{tab:agg_dm_control_scores} for numerical scores). However, there exists some
overlap between the 95\% confidence intervals, which tend to be large in our benchmark due to a wide
range of normalized scores across different environments. In this scenario, the recommendation in
\citet{agarwal_deep_2021} is to analyze score distribution performance profiles, as presented in
\cref{fig:agg_perf}, which provide a more complete overview of the results. We observe the EQR-SAC
OFU performance profile tends to dominate over the baselines, especially for normalized scores
between $[0.5, 0.8]$.

We observe a clear gap in performance between MBPO and QR-MBPO, which supports the observations from
\citet{lyle_comparative_2019} and reinforces their hypothesis that distributional critics boost
performance under non-linear function approximation. The gap between QU-SAC and the distributional
methods (QR-MBPO / EQR-SAC) indicates that the quantile representation of values leads to more
sample-efficient learning compared to the ensemble-based approach. Moreover, training one
distributional critic is typically less computationally intensive than training an ensemble of
standard critics. In the next section, we investigate more deeply the performance difference between
EQR-SAC and QR-MBPO.

\begin{figure}[t]
  \includegraphics[width=1.0\textwidth]{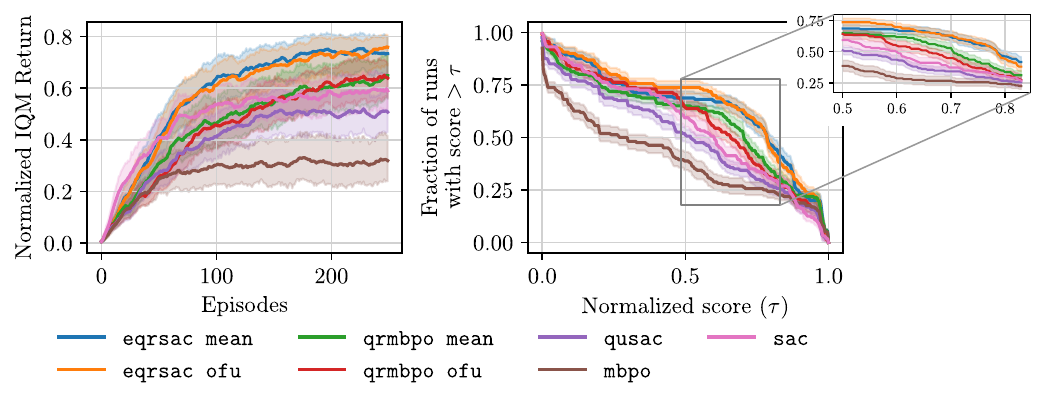}
  \caption{Aggregated performance in DMC benchmark with 95\% bootstrap confidence intervals.
  \textbf{(Left)} Inter-quartile mean returns normalized by the maximum achievable score of 1000.
  \textbf{(Right)} Performance profile at 250 episodes of training, with zoom in region with the
  most spread in results. In both cases, higher curves correspond to better performance.}
  \label{fig:agg_perf}
\end{figure}

\subsection{Why Does EQR-SAC Outperform QR-MBPO?}
\label{subsec:qrmbpo_investigation}
We conduct an additional experiment to determine what component(s) of EQR-SAC are responsible for
the increased performance with respect to QR-MBPO. There is three differences between EQR-SAC and
QR-MBPO: (i) EQR-SAC has a buffer $n$ times bigger than QR-MBPO, and correspondingly scales up the
amount of data collected under the ensemble, (ii) EQR-SAC uses consistent rollouts, while QR-MBPO
randomizes the model per-step, and (iii), EQR-SAC's critic is trained on the loss
\cref{eq:critic_loss}, while QR-MBPO's critic uses \cref{eq:critic_loss_qrmbpo}. In order to test
the impact of each component in isolation, we add three additional baselines: QR-MBPO-big, which
uses the same buffer size and collects the same amount of data as EQR-SAC; QR-MBPO-consistent, that
replicates how EQR-SAC collects data under the model; and QR-MBPO-loss, that uses
\cref{eq:critic_loss} to train its critic. \cref{fig:qrmbpo_investigation} shows the performance of
EQR-SAC and all QR-MBPO variants (all methods optimize the actor using $f_{\text{mean}}$). The main
observation is that QR-MBPO-loss matches closely the performance of EQR-SAC, while all other QR-MBPO
variants share similar (lower) performance. The key insight from these results is that our proposed
critic loss function \cref{eq:critic_loss} is instrumental towards sample-efficient learning,
especially in environments with sparse rewards like cartpole swing-up (see also fish-swim and
finger-spin in \cref{app:dm_control_curves}). As such, our theory provides a solid guideline on how
to integrate model-based RL architectures with distributional RL tools, which goes beyond simply
using a distributional critic with established algorithms like MBPO.

\begin{figure}[t]
  \includegraphics{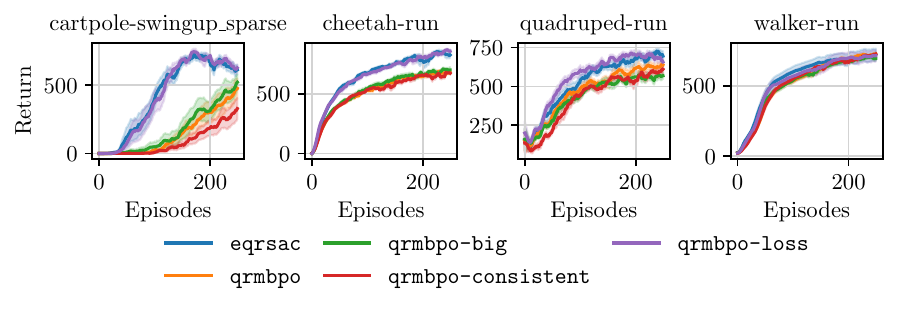}
  \caption{Comparison of EQR-SAC and QR-MBPO baselines in selected DeepMind Control tasks. The results
  suggest that the biggest contributing factor for increased performance of EQR-SAC w.r.t QR-MBPO is
  the critic's loss function \cref{eq:critic_loss}.}
  \label{fig:qrmbpo_investigation}
\end{figure}

\begin{figure}[t]
  \includegraphics{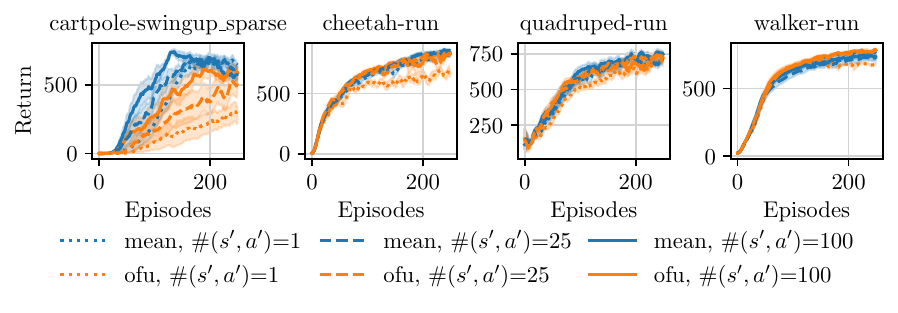}
  \caption{Ablation study on the amount of next state-action samples drawn to approximate the target
  quantiles \cref{eq:target_quantile}. Larger sample sizes perform more robustly across all
  environments.}
  \label{fig:ablation_num_samples}
\end{figure}

\subsection{Dynamics Sampling for Target Quantiles}
The analysis in \cref{subsec:qrmbpo_investigation} points to the loss function \cref{eq:critic_loss}
as being the key component of our proposed approach. The main feature of our loss function is how it
utilizes the generative model to produce the target quantiles \cref{eq:target_quantile}. In this
experiment, we investigate the effect of the amount of next state-action samples $(s', a')$ drawn
from the ensemble of dynamics when estimating the target quantiles. The hypothesis is that a larger
sample size would result in a lower-variance estimate of the expectation in
\cref{eq:target_quantile}, which could then lead to better sample-efficiency.
\cref{fig:ablation_num_samples} shows the performance of EQR-SAC, for both $f_{\text{mean}}$ and
$f_{\text{ofu}}$, under different sampling regimes. For the cartpole task, we observe a clear
progression in sample-efficiency as the amount of samples increases. For other environments the
differences are less noticeable, but using a sample size of $1$ with the optimistic objective leads
to worse performance in all cases. Since the sample size might have large effects in performance and
its runtime impact is greatly amortized by GPU parallelization, the overall recommendation is to use
larger sample sizes (25-100) by default.

\subsection{Additional Experiments}
In \cref{app:ablations}, we include additional ablation studies on three hyperparameters of our
algorithm: the number of quantiles $(m)$, the rollout length $(k)$ and the size of
$\mathcal{D}_{\text{model}}$ (which controls the amount of off-policy data in the buffer). The
general observation from these experiments is that EQR-SAC's performance is robust for a wide range
of values. Performance typically degrades only for extreme values of the parameters, for example
$m=1$ (only estimate median of value distribution) or $k=1$ (only generate 1-step rollouts with the
model).

Furthermore, in \cref{app:optimistic_opt} we conduct experiments to investigate the utility of
optimistic values in policy optimization. The empirical results suggest that optimism has little
effect under dense rewards, while under sparse rewards higher optimism can be benefitial in some
tasks and detrimental in others. We also study the effect of optimism in tasks with sparse rewards
\emph{and} action costs, similar to the MountainCar problem of \cref{subsec:mountaincar}. For the
tested environments, higher optimism does not improve final performance and is typically less sample
efficient, unlike the MountainCar results. Overall, all these results indicate that the benefits of
optimistic optimization might be environment-dependent. We believe an interesting avenue for future
work is to more broadly analyze this phenomenon and reconsider our design choices (ensemble as
posterior MDP, quantile representation for values, policy objectives, etc.).

\section{Conclusions}
We investigated the problem of estimating the distribution of values, given parameter uncertainty of
the MDP. We proposed the value-distributional Bellman equation and extracted an operator whose
fixed-point is precisely the distribution of values. Leveraging tools from return-distributional RL,
we designed Epistemic Quantile-Regression, an iterative procedure for estimating quantiles of the
value distribution. We applied our algorithm in small MDPs, validated the convergence properties
prescribed by our theory and assessed its limitations once the main assumptions are violated.
Lastly, we introduced EQR-SAC, a novel model-based deep RL algorithm that scales up EQR with
neural network function approximation and combines it with SAC for policy optimization. We
benchmarked our approach in several continuous-control tasks from the DeepMind Control suite and
showed improved sample-efficiency and final performance compared to various baselines.

%%%%%%%%%%%%%%%%%%%%%%%%%%%%%%%%%%%
%%%%%% APPENDICES %%%%%%
%%%%%%%%%%%%%%%%%%%%%%%%%%%%%%%%%%%
\clearpage
\appendix
\section{Unrolling MDP with Cycles}
\label{app:unrolling}
In \cref{fig:unrolling} we show the unrolling procedure of an MDP that contains cycles.
The two MDPs are \emph{not} equivalent, but the unrolled approximation improves as $H$
grows. 

While unrolling the MDP is rarely done in practice, it serves as a reasonable approximation to extend our theoretical results to the general setting of MDPs that contain cycles.

\begin{figure}[H]
  \includegraphics[width=\textwidth]{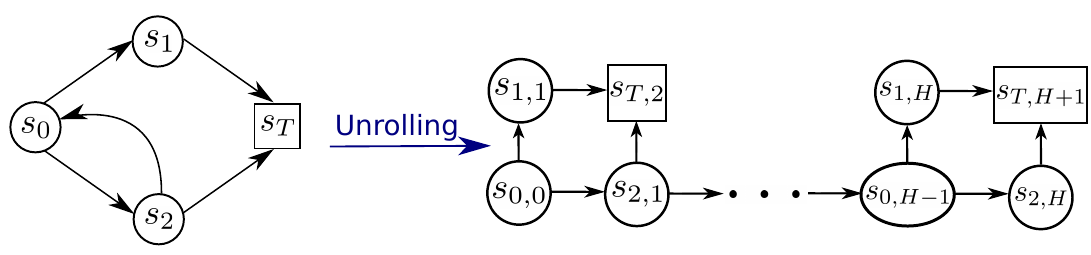}
  \caption{Procedure of unrolling an MDP with cycles. We denote by $s_{i, k}$ the
  unrolled state which represents being in state $s_i$ of the original MDP at time step
  $k$. The unrolled MDP is only an \emph{approximation} of the original version due to
  truncation after finite $H$ steps.}
  \label{fig:unrolling}
\end{figure}

\section{Theory Proofs}
\label{app:theory}
\randomvarbellman*
\begin{proof}
  We proceed similarly as the standard Bellman equation proof shown by
  \citet{bellemare_distributional_2023}. First, the random trajectories $\tilde{T}$ have two
  properties endowed by the Markov decision process: time homogeneity and the Markov property.
  Informally speaking, time homogeneity states that the trajectory from a given state $s$ is
  independent of the time $k$ at which the state is visited, while the Markov property states that
  trajectories starting from $s$ are independent of states, actions or rewards encountered before
  $s$ (c.f. \citet{bellemare_distributional_2023} Lemmas 2.13, 2.14 for a formal definition). In the
  domain of random variables, these properties imply that two trajectories starting from the same
  initial state $s$ are equally distributed regardless of past history.

  From the definition \cref{eq:random_value} we decompose the random value into the immediate reward
  and the value at the next state:
  \begin{align}
    V^\pi(s) &= \E_{\tilde{T}}\bracket{
      R_0 \middle| S_0 = s,\, P
    } + \gamma\E_{\tilde{T}} \bracket{
      \sum_{h=1}^\infty \gamma^{h-1} R_h \middle| S_0=s,\, P
    }. \intertext{
      For the first term, the only random variable remaining is $A_0$, so we rewrite it as
    }
    &= \sum_a \pi(a\mid s)r(s, a) + \gamma\E_{\tilde{T}} \bracket{
      \sum_{h=1}^\infty \gamma^{h-1} R_h \middle| S_0=s,\, P}. \intertext{
        For the second term, we apply the tower property of expectations
      }
    &= \sum_a \pi(a\mid s)r(s, a) + \gamma\E_{\tilde{T}} \bracket{ \E_{\tilde{T}} \bracket{
      \sum_{h=1}^\infty \gamma^{h-1} R_h \middle| S_0=s,\,A_0,\,S_1,\, P} \middle| S_0=s,\,P}. \intertext{
        By the Markov property, 
      }
    &= \sum_a \pi(a\mid s)r(s, a) + \gamma\E_{\tilde{T}} \bracket{ \E_{\tilde{T}} \bracket{
      \sum_{h=1}^\infty \gamma^{h-1} R_h \middle| S_1,\, P} \middle| S_0=s,\,P}. \intertext{
        By time homogeneity, the inner expectation is exactly equal to the random variable $V^\pi(S_1)$, after a change of variable in the infinite sum index
      }
    &= \sum_a \pi(a\mid s)r(s, a) + \gamma\E_{\tilde{T}}\bracket{V^\pi(S_1) \middle| S_0=s,\,P}. \intertext{
      Lastly, the remaining random variable is $S_1$, for which we can
      explicitly write its probability distribution, concluding the proof
      }
    &= \sum_a \pi(a\mid s)r(s, a) + \gamma\sum_{a, s'} \pi(a\mid s)P(s' \mid s,a)V^\pi(s').
  \end{align}
\end{proof}

\textbf{Notation:} for the following Lemma, we use the notation $\mathfrak{D}(X)$ to
denote the distribution of the random variable $X \in \mathcal{X}$, as done in
\citet{bellemare_distributional_2023}. In particular, we use $\mathfrak{D}_P$ to denote
the distribution of a random variable belonging to the probability space of $P$, i.e.,
random variables derived from the posterior distribution $\Phi(P \mid \mathcal{D})$.

\bellmandist*
\begin{proof}
  In matrix-vector format, the random-variable value-distributional Bellman equation is expressed as
  \begin{equation}
    \label{eq:lemma_1_first}
    \mathbf{V}^\pi = \mathbf{r}^\pi + \gamma \mathbf{P}^\pi \mathbf{V}^\pi.
  \end{equation}
  Since $\mu^\pi$ refers to the distribution of the random variable
  $\mathbf{V}^\pi$, which belongs to the probability space of the random transition
  function $P$, then we use our notation to write $\mu^\pi =
  \mathfrak{D}_P(\mathbf{V}^\pi)$. Further, using the notation $\mathfrak{D}_{P}(\cdot)$
  on the r.h.s of \cref{eq:lemma_1_first} yields
  \begin{equation}
    \label{eq:lemma_1_second}
    \mu^\pi = \mathfrak{D}_{P}(\mathbf{r}^\pi + \gamma \mathbf{P}^\pi \mathbf{V}^\pi).
  \end{equation}
  For any two random variables $X, Y$ in the same probability space, it holds that
  the marginal distribution over $X$ can be written as the expected value over $Y$ of
  the conditional distribution. That is, $\mathfrak{D}(X) = \E_Y\bracket{\mathfrak{D}(X
  \mid Y)}$, which follows from standard probability theory \citep{wasserman_all_2013}.
  Applying this property to the r.h.s of \cref{eq:lemma_1_second} results in
  \begin{equation}
    \mu^\pi = \E_P\bracket{\mathfrak{D}_{P}(\mathbf{r}^\pi + \gamma \mathbf{P}^\pi \mathbf{V}^\pi \mid \mathbf{P}^\pi)}.
  \end{equation}
  A similar derivation can be found in related prior work studying the variance of
  $\mu^\pi$
  \citep{odonoghue_uncertainty_2018,zhou_efficient_2019,luis_model-based_2023}.

  Given that $P(s' \mid s,a)$ and $V^\pi(s')$ are independent under our assumptions, then
  conditioning on $\mathbf{P}^\pi$ means that the distribution of the  matrix-vector product
  $\mathbf{P}^\pi \mathbf{V}^\pi$ is simply the distribution of applying a linear transformation on
  $\mathbf{V}^\pi$. The result is that the conditional distribution can be interpreted as the
  pushforward
  \begin{equation}
    \mathfrak{D}_{P}(\mathbf{r}^\pi + \gamma \mathbf{P}^\pi \mathbf{V}^\pi \mid \mathbf{P}^\pi) = (b_{r^\pi, P^\pi, \gamma})_{\#}\mu^\pi,
  \end{equation}
  which completes the proof.
\end{proof}

We adopt the supremum $p$-Wasserstein distance to establish contractivity of the
operator~$\mathcal{T}^\pi$.
\begin{definition}
  \label{def:wasserstein}
  For $p \in [1, \infty)$, the $p$-Wasserstein distance between two distributions $\nu, \nu'$ is a
  metric $w_p: \mathcal{P}(\R) \times \mathcal{P}(\R) \to [0, \infty]$ defined by
  \begin{equation}
    w_p(\nu, \nu') = \paren{\int_0^1 \abs{F^{-1}_\nu(\tau) - F^{-1}_{\nu'}(\tau)}^p d\tau}^{1 / p},
  \end{equation}
  where $F^{-1}_{(\cdot)}$ is the inverse cumulative distribution function. Furthermore, the
  supremum $p$-Wasserstein distance $\bar{w}_p$ between two value distribution functions $\mu,\mu'
  \in \mathcal{P}(\R)^{\mathcal{S}}$ is defined by $\bar{w}_p(\mu, \mu') = \sup_{s \in \mathcal{S}}
  w_p(\mu(s), \mu'(s))$.
\end{definition}
The supremum $p$-Wasserstein distance was proven to be a metric in $\mathcal{P}(\R)^{\mathcal{S}}$
by \citet{bellemare_distributional_2017}. 

To prove that $\mathcal{T}^\pi$ is a contraction, we adopt the technique from
\citet{bellemare_distributional_2023} that relies on the alternative definition of the
$p$-Wasserstein distance in terms of couplings.

\begin{definition}[Coupling \citep{villani_optimal_2008} Definition 1.1 (adapted)]
  \label{def:wasserstein_coupling}
  Let $\nu, \nu' \in \mathcal{P}(\R)$ be two probability distributions over the reals. A
  coupling between $\nu$ and $\nu'$ is a joint probability distribution $\upsilon \in
  \mathcal{P}(\R^2)$ whose marginals are $\nu$ and $\nu'$. That is, given random
  variables $(V, V') \sim \upsilon$, we have $V \sim \nu$ and $V' \sim \nu'$. Further,
  we denote $\Gamma(\nu, \nu') \subseteq \mathcal{P}(\R^2)$ the set\footnote{This
  set is non-empty: there exists a trivial coupling in which the variables $V$, $V'$ are
  independent \citep{villani_optimal_2008}} of all couplings between $\nu$ and $\nu'$.
\end{definition}

Intuitively, the coupling $\upsilon$ can be interpreted as a transport plan to move
probability mass from one distribution to another. The $p$-Wasserstein distance can also
be defined as the cost of the optimal transport plan
\begin{equation}
  \label{eq:wasserstein_min}
  w_p(\nu, \nu') = \min_{\upsilon \in \Gamma(\nu, \nu')} \E_{(V, V') \sim \upsilon} \bracket{
    \abs{V - V}^p
  }^{1/p}.
\end{equation}
The existence of an optimal coupling $\upsilon^\star$ that minimizes
\cref{eq:wasserstein_min} is guaranteed since $\nu, \nu'$ are measures defined on a
complete, separable metric space ($\R$ with the usual metric) and equipped with the
corresponding Borel $\sigma$-algebra (i.e., $\nu, \nu'$ are measures on a Polish space)
\citep[cf.][Theorem 4.1]{villani_optimal_2008}.

With these definitions, we now proceed to prove the contraction of the Bellman operator.
\contraction*
\begin{proof}
  We follow closely the proofs of Proposition 4.1 by
  \citet{amortila_distributional_2020} and Proposition 4.15 by
  \citet{bellemare_distributional_2023}. For each $s \in \mathcal{S}$, let
  $\upsilon^\star$ denote the optimal coupling that minimizes the $p$-Wasserstein metric
  from \cref{def:wasserstein} between some arbitrary pair of value distributions
  $\mu(s), \mu'(s) \in \mathcal{P}(\R)$, so that $(V(s), V'(s)) \sim \upsilon^\star$.

  Define new random variables $\tilde{V}(s) = r^\pi(s) + \gamma \sum_{s'}P^\pi(s'\mid s) V(s')$,
  $\tilde{V}'(s) = r^\pi(s) + \gamma \sum_{s'}P^\pi(s'\mid s) V'(s')$. By definition of the operator
  $\mathcal{T}^\pi$, we have that $\tilde{V}(s) \sim (\mathcal{T}^\pi \mu)(s)$ and $\tilde{V}'(s)
  \sim (\mathcal{T}^\pi \mu')(s)$, which means that the pair $(\tilde{V}(s), \tilde{V}'(s)) \sim
  \tilde{\upsilon}$ is a coupling between $(\mathcal{T}^\pi \mu)(s)$ and $(\mathcal{T}^\pi
  \mu')(s)$.

  Starting from \cref{def:wasserstein_coupling} and since the $p$-Wasserstein distance
  is a minimum over couplings, then
  \begin{align}
    w_p^p \paren{(\mathcal{T}\mu)(s), (\mathcal{T}\mu')(s)} &\leq \E_P\bracket{\abs{\tilde{V}(s) - \tilde{V}'(s)}^p}.
    \intertext{Plugging the definition of the random variables,}
    &=\E_P\bracket{\abs{
      r^\pi(s) + \gamma\sum_{s'}P^\pi(s'\mid s)V(s') - r^\pi(s) - \gamma \sum_{s'}P^\pi(s' \mid s)V'(s')}^p}
    \intertext{By re-arrangement of terms}
    &=\gamma^p\E_P\bracket{\abs{
      \sum_{s'}P^\pi(s'\mid s)(V(s') - V'(s'))}^p}.
    \intertext{Since $f(x) = \abs{x}^p$ is convex for $p \geq 1$, then by Jensen's inequality}
    &\leq\gamma^p\E_P\bracket{
      \sum_{s'}P^\pi(s'\mid s)\abs{(V(s') - V'(s'))}^p}.
    \intertext{By linearity of expectation}
    &=\gamma^p\sum_{s'}\E_P\bracket{P^\pi(s'\mid s)\abs{(V(s') - V'(s'))}^p}.
    \intertext{By the independence assumption on $P^\pi(s' \mid s)$, the expectation of the
    product becomes the product of expectations}
    &=\gamma^p\sum_{s'}\E_P\bracket{P^\pi(s'\mid s)}\E_P\bracket{\abs{(V(s') - V'(s'))}^p}.
    \intertext{Since the supremum of non-negative values is greater or equal
    than any convex combination of them}
    &\leq\gamma^p \sup_{s'}\E_P\bracket{\abs{(V(s') - V'(s'))}^p}.
    \intertext{By definition of the supremum $p$-Wasserstein distance}
    &=\gamma^p \bar{w}_p^p(\mu, \mu').
  \end{align}
  Taking supremum on the left-hand side and taking the $p$-th root on both sides completes the
  proof.
\end{proof}
\Cref{thm:contraction} parallels similar results in standard RL and model-free distributional RL, in
that it allows us to establish the convergence of iterated applications of $\mathcal{T}^\pi$
(\cref{corollary:convergence}).
\subsection{Supporting Lemmas}
\begin{lemma}
  \label{lemma:independence}
  Under \cref{assumption:transitions,assumption:acyclic,assumption:truncated},
  $P(s' \mid s, a)$ and $V^\pi(s')$ are conditionally independent random variables
  for all triplets $(s, a, s') \in \mathcal{S}\times \mathcal{A}\times \mathcal{S}$.
\end{lemma}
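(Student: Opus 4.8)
The plan is to reduce the independence claim to a statement about \emph{functional dependence}: I will argue that $V^\pi(s')$ can be written as a measurable function of only those transition blocks $P(\cdot \mid \tilde s, \tilde a)$ whose originating state $\tilde s$ is reachable from $s'$, and that the block $P(\cdot \mid s, a)$ is never among them. Since \cref{assumption:transitions} makes the blocks $\{P(\cdot \mid \tilde s, \tilde a)\}_{(\tilde s, \tilde a)}$ mutually independent, showing that $V^\pi(s')$ is $\sigma(\{P(\cdot \mid \tilde s, \tilde a) : (\tilde s, \tilde a) \neq (s,a)\})$-measurable immediately yields that $V^\pi(s')$ is independent of the entire block $P(\cdot \mid s, a)$, and hence \emph{a fortiori} independent of its scalar component $P(s' \mid s, a)$.

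First I would establish the functional-dependence claim. Under \cref{assumption:acyclic,assumption:truncated}, every realization $\tilde p$ of $P$ admits a topological order on the non-terminal states in which all edges point ``forward'' and any trajectory reaches the absorbing $s_T$ within $H \leq \abs{\mathcal{S}}$ steps. Solving the Bellman equation \cref{eq:standard_bellman} by backward induction from $s_T$ along this order expresses $v^{\pi,\tilde p}(s')$ — and hence $V^\pi(s') = v^{\pi,P}(s')$ via \cref{eq:random_value} — as a function that only queries $\tilde p(\cdot \mid \tilde s, a)$ for states $\tilde s$ lying on some positive-probability path out of $s'$, i.e.\ reachable from $s'$. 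States not reachable from $s'$ contribute nothing to the discounted return started at $s'$, so their blocks do not enter.

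The crux is then to show that $s$ is not reachable from $s'$. Consider a triplet with $s' \neq s$ for which $P(s' \mid s, a) > 0$ occurs with positive probability (the remaining triplets are degenerate: if $s' = s$ then acyclicity forces the self-transition $P(s \mid s,a) \equiv 0$, and any structurally-excluded successor likewise has $P(s'\mid s,a) \equiv 0$, so $P(s'\mid s,a)$ is constant and trivially independent of everything). If $s$ were reachable from $s'$, there would be a realization assigning positive probability to a simple path $s' \rightsquigarrow s$ whose transitions leave only states distinct from $s$, hence are governed by blocks independent of $P(\cdot \mid s,a)$ by \cref{assumption:transitions}. One could then select a realization in the support of $\Phi$ that \emph{simultaneously} assigns positive probability to the edge $s \to s'$ and to the path $s' \rightsquigarrow s$; this realization would contain the cycle $s \to s' \rightsquigarrow s$, contradicting \cref{assumption:acyclic}, which requires \emph{every} realization to be acyclic. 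Hence $s$ is unreachable from $s'$, the block $P(\cdot \mid s,a)$ does not appear in the functional representation of $V^\pi(s')$, and the independence follows from block independence as above.

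The main obstacle I anticipate is the careful handling of reachability \emph{across} realizations in the crux step: reachability is defined per realization, yet acyclicity constrains all realizations, so the contradiction must combine an edge that is positive in some realizations with a path that is positive in possibly different ones. Making this rigorous is exactly where \cref{assumption:transitions} is indispensable, since mutual independence of the parameter blocks is what guarantees the ``combined'' realization lies in the support of $\Phi$. A secondary technical point is formalizing the measurability in the first step, which is routine given the finite topological order supplied by \cref{assumption:acyclic,assumption:truncated}.
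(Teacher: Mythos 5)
Your proposal is correct and follows essentially the same route as the paper: both arguments reduce the claim to the observation that, under \cref{assumption:acyclic,assumption:truncated}, the return accrued after reaching $s'$ only ever queries transition blocks $P(\cdot \mid \tilde s, \cdot)$ with $\tilde s \neq s$, after which \cref{assumption:transitions} delivers independence of $V^\pi(s')$ from the whole block $P(\cdot \mid s, a)$. The one point where you go beyond the paper is your cross-realization reachability step (combining a positive-probability edge $s \to s'$ with a positive-probability path $s' \rightsquigarrow s$ to manufacture a cyclic realization contradicting \cref{assumption:acyclic}); the paper glosses over this by factorizing the probability of a single trajectory started at $s$ and invoking uniqueness of states along it, so your version is the more careful rendering of the same idea.
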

\begin{proof}
  Let $T_{0:\infty}$ be a random trajectory under the random transition dynamics $P$.
  Under \cref{assumption:acyclic,assumption:truncated}, $T_{0:\infty}$ is a
  sequence of $H$ random, but \emph{unique} states followed by the terminal (absorbing)
  state $\set{S_0, S_1, \dots, S_{H}, s_T, s_T, \dots}$, i.e., we have $S_i \neq S_j$
  for all $i \neq j$. We prove the Lemma by dividing the random trajectory into two
  segments: the random (finite) segment for step $h \leq H$ and the deterministic
  (infinite) segment for $h > H$.
  
  \paragraph{Case $T_{0:H}$.} Under \cref{assumption:transitions}, the conditioned
  trajectory probability $\Prob(T_{0:H} \mid P)$, which is itself a random variable
  through conditioning on $P$, is a product of independent random variables defined by
  \begin{align}
    \Prob(T_{0:H} \mid P) &= \prod_{h=0}^{H-1}\pi(A_h \mid S_h)P(S_{h+1} \mid S_h, A_h) \\
    &= P(S_1 \mid S_0, A_0) \pi(A_0 \mid S_0)\prod_{h=1}^{H-1}\pi(A_h \mid S_h)P(S_{h+1} \mid S_h,
    A_h). \\
    &= P(S_1 \mid S_0, A_0)\pi(A_0 \mid S_0) \Prob(T_{1:H} \mid P).
  \end{align}
  Note that each transition probability in $\Prob(T_{0:H} \mid P)$ is distinct by
  \cref{assumption:acyclic}. Additionally, for any $h>0$ the action probability
  $\pi(A_h \mid S_h)$ is \emph{conditionally independent} of $P(S_h \mid S_{h-1},
  A_{h-1})$ given $S_h$. Then, for arbitrary $S_0=s$, $A_0 = a$ and $S_1 = s'$, we have
  that $P(s' \mid s, a)$ is conditionally independent of $\Prob(T_{1:H-1}\mid P)$. Since
  $V^\pi(S_1 \mid S_1 = s')$ is a function of $\Prob(T_{1:H}\mid P)$, then it is also
  conditionally independent of $P(s' \mid s, a)$.

  \paragraph{Case $T_{H:\infty}$.} The Lemma is trivially satisfied since both the
  transition probability and the values become constants: we have $P(s_T \mid s_T, a) =
  1$ and $V^\pi(s_T) = 0$.

  Combining both results, we have that $P$ and $V^\pi$ are conditionally independent for
  any arbitrary infinite trajectory, which completes the proof.
\end{proof}
\begin{lemma}
  \label{lemma:indep_property}
  Define an arbitrary $\mu_0 \in \mathcal{P}_B(\R)^\mathcal{S}$. Then, under
  \cref{assumption:transitions,assumption:acyclic,assumption:truncated}, the sequence
  $\set{\mu_k}_{k=0}^\infty$ defined by $\mu_{k+1} = \mathcal{T}^\pi\mu_k$ is such that
  for all $k\geq0$ the random variable $V_k(s') \sim \mu_k(s')$ is conditionally
  independent of $P^\pi(s' \mid s)$  given $s' \in \mathcal{S}$.
\end{lemma}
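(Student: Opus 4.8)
The plan is to prove the statement by induction on $k$, carrying a slightly stronger invariant than the one stated: that for every state $\tilde s$, the random variable $V_k(\tilde s) \sim \mu_k(\tilde s)$ is a measurable function of the base randomness in $\mu_0$ together with only those transition parameters $\{P(\cdot \mid \hat s, a) : a \in \mathcal{A}\}$ whose source state $\hat s$ is either $\tilde s$ itself or a descendant of $\tilde s$ (a state reachable from $\tilde s$ under some realization of $P$). The stated lemma then follows as a corollary, via \cref{assumption:acyclic,assumption:transitions}, as described below.

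For the base case $k=0$, the initial guess $\mu_0$ is fixed and does not depend on the random transition function $P$, so $V_0(\tilde s) \sim \mu_0(\tilde s)$ is independent of $P$ in its entirety, and the invariant holds vacuously (its dependency set of transition parameters is empty). For the inductive step, I would use the per-state form of the operator in \cref{eq:distributional_bellman_per_state}: a sample of $V_{k+1}(s')$ is generated as $V_{k+1}(s') = r^\pi(s') + \gamma V_k(S'')$, where $S'' \sim P^\pi(\cdot \mid s')$ and $V_k(S'') \sim \mu_k(S'')$. Thus $V_{k+1}(s')$ is a function of (i) the outgoing parameters $\{P(\cdot \mid s', a)\}$ at $s'$, which fix the law of $S''$, and (ii) the downstream value $V_k(S'')$. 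By the inductive invariant, the latter depends only on parameters at $S''$ and its descendants; since every descendant of $S''$ is also a descendant of $s'$, the whole of $V_{k+1}(s')$ depends only on parameters at $s'$ and its descendants, re-establishing the invariant at level $k+1$.

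To derive the stated conclusion, I fix states $s, s'$ and consider $P^\pi(s' \mid s) = \sum_a \pi(a \mid s) P(s' \mid s, a)$, a function of the parameters $\{P(\cdot \mid s, a)\}$ at the source state $s$. Conditioning on landing in $s'$ means $s'$ is reached from $s$, so the edge $s \to s'$ is present; by \cref{assumption:acyclic} no realization admits a path from $s'$ back to $s$, hence $s$ is neither $s'$ nor a descendant of $s'$. Consequently, the $\sigma$-algebra generated by the parameters on which $V_{k+1}(s')$ depends is disjoint, at the level of source states, from $\sigma(\{P(\cdot \mid s, a) : a\})$. \cref{assumption:transitions} makes these two $\sigma$-algebras independent, so $V_{k+1}(s')$ is conditionally independent of $P^\pi(s' \mid s)$ given $s'$, as required; the terminal-state case is immediate since $V_k(s_T) = 0$ and $P(s_T \mid s_T, a) = 1$ are deterministic.

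I expect the main obstacle to be making the dependency-set invariant rigorous rather than merely intuitive: the set of parameters that $V_{k+1}(s')$ depends on is itself random through the sampled next state $S''$, and ``descendant'' is a realization-dependent notion, so the argument must be phrased at the level of $\sigma$-algebras (e.g., taking the union of supports of $P$ so the relevant descendant set is fixed) and must carefully combine acyclicity, to guarantee $s$ never enters this set, with parameter independence. A secondary subtlety is matching the informal ``given $s'$'' conditioning with the mixture-over-next-states construction of $\mathcal{T}^\pi$, mirroring the conditioning on $S_1 = s'$ used in the proof of \cref{lemma:independence}.
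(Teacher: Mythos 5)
Your proposal is correct and follows essentially the same route as the paper: the paper unrolls $\mathcal{T}^\pi$ applied $k$ times into a $k$-step trajectory expectation started at $s'$ and argues via \cref{assumption:acyclic} that this trajectory never revisits $s$ (so $V_k(s')$ involves only downstream transition parameters, which \cref{assumption:transitions} renders independent of $P^\pi(s'\mid s)$), while you establish the identical dependency-set claim by induction on $k$ rather than in closed form. The subtleties you flag (realization-dependence of the descendant set, the meaning of conditioning on $s'$) are glossed over at the same level in the paper's own proof, so your argument is not weaker than theirs.
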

\begin{proof}
  Intuitively, by definition of the operator $\mathcal{T}^\pi$, the random variable
  $V_k(s')$ is the result of summing rewards starting from state $s'$, following the
  random dynamics $P^\pi$ for $k$ steps and then bootstraping with $V_0$. That is,
  applying $\mathcal{T}^\pi$ $k$ times results in the random variable
  \begin{equation}
    \label{eq:v_k}
    V_k(s') = \E_{T_{0:k}}\bracket{\sum_{h=0}^{k-1} \gamma^h R_h + \gamma^k V_0(S_k) \middle| S_0=s', P},
  \end{equation}
  where $T_{0:k}$ is a random state trajectory $\set{S_0, S_1, ..., S_k}$ starting from
  $S_0 = s'$ and following the random dynamics $P$. Under
  \cref{assumption:transitions,assumption:acyclic,assumption:truncated}, $T_{0:k}$ is a
  sequence of unique states (with the exception of the absorbing terminal state), which
  means that the probability of returning to $s'$ are zero. Finally, since $V_k(s')$ is
  an expectation conditioned on the initial state being $s'$, it follows that $V_k(s')$
  is conditionally independent of $P^\pi(s' \mid s)$ given $s'$.
\end{proof}

\begin{lemma}
  \label{lemma:bounded_support}
  If the value distribution function $\mu$ has bounded support, then $\mathcal{T}^\pi\mu$ also
  has bounded support.
\end{lemma}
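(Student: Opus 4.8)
The plan is to convert the topological statement about supports into an almost-sure bound on the relevant random variables and then propagate that bound through the bootstrap map, exploiting that $P^\pi(\cdot\mid s)$ is a probability vector. First I would adopt the sample-based representation already used in the proof of \cref{thm:contraction}: a random variable distributed according to $(\mathcal{T}^\pi\mu)(s)$ can be written as $\tilde{V}(s) = r^\pi(s) + \gamma \sum_{s'} P^\pi(s'\mid s) V(s')$, where $P$ is drawn from the posterior $\Phi$ and $V(s')\sim\mu(s')$. Since $\mathcal{T}^\pi\mu$ has bounded support precisely when each $(\mathcal{T}^\pi\mu)(s)$ does, it suffices to show that $\tilde{V}(s)$ is almost surely bounded by a finite constant that does not depend on the realizations of $P$ and $V$.

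Because $\mu$ has bounded support and $\mathcal{S}$ is finite, there is a uniform constant $M$ with $|V(s')|\le M$ almost surely for every $s'$; likewise, since $r$ is bounded, $|r^\pi(s)| = |\sum_a \pi(a\mid s) r(s,a)| \le R_{\max}$ for some finite $R_{\max}$. The key structural fact is that for each realization of $P$, the row $P^\pi(\cdot\mid s)$ is a probability distribution over next states, so $\sum_{s'} P^\pi(s'\mid s) V(s')$ is a convex combination of the values $\{V(s')\}$ and therefore satisfies $|\sum_{s'} P^\pi(s'\mid s) V(s')| \le \max_{s'}|V(s')| \le M$. Note that this bound is indifferent to the joint dependence among the $V(s')$, so no assumption on their correlation is needed. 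Combining via the triangle inequality yields $|\tilde{V}(s)| \le R_{\max} + \gamma M$ almost surely.

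Finally, since this bound is uniform in $s$, in the realization of $P$, and in the samples $V(s')$, the mixture over $P$ cannot enlarge the support: all mass of $(\mathcal{T}^\pi\mu)(s)$ lies in the interval $[-(R_{\max}+\gamma M),\, R_{\max}+\gamma M]$, so $(\mathcal{T}^\pi\mu)(s)$ has bounded support, and taking the (finite) supremum of the per-state bounds over $\mathcal{S}$ gives $\mathcal{T}^\pi\mu\in\mathcal{P}_B(\R)^\mathcal{S}$. The only point needing slight care is the interaction between the mixture operator $\E_P[\cdot]$ and the support claim, but because the almost-sure bound holds for every fixed $P$ with the \emph{same} constant, the mixture inherits the bound; hence there is no substantive obstacle, and the result is a direct ``boundedness in, boundedness out'' argument.
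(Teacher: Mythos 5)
Your proof is correct and follows essentially the same route as the paper's: bound a realization $r^\pi(s) + \gamma\sum_{s'}P^\pi(s'\mid s)V(s')$ almost surely, using that rewards are bounded and that $P^\pi(\cdot\mid s)$ is a probability vector, with a constant uniform over realizations of $P$ so the mixture $\E_P[\cdot]$ cannot enlarge the support. The only difference is that the paper instantiates the specific constants $v_{\min}=r_{\min}/(1-\gamma)$ and $v_{\max}=r_{\max}/(1-\gamma)$, so that the \emph{same} interval $[v_{\min},v_{\max}]$ is mapped into itself rather than into the possibly larger $[-(R_{\max}+\gamma M),\,R_{\max}+\gamma M]$, which is the invariance form actually invoked in \cref{corollary:convergence}.
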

\begin{proof}
  From bounded rewards on $[r_{\min}, r_{\max}]$, then we denote by
  $\mathcal{P}_B(\R)^\mathcal{S}$ the space of value distributions bounded on $[v_{\min},
  v_{\max}]$, where $v_{\min} = r_{\min} / (1 - \gamma)$ and $v_{\max} = r_{\max} / (1 - \gamma)$.

  Given arbitrary $\mu \in \mathcal{P}_B(\R)^\mathcal{S}$, let $v(s)$ be a realization of
  $\mu(s)$ for any $s \in \mathcal{S}$. Then, $\sum_a \pi(a \mid s)r(s,a) + \gamma \sum_{a, s'}\pi(a
  \mid s)P(s' \mid s, a) v(s')$ is an instantiation of $(\mathcal{T}^\pi \mu)(s)$ for any $s \in
  \mathcal{S}$. We have:
  \begin{align}
    \mathbb{P}\paren{\paren{\mathcal{T}^\pi \mu}(s) \leq v_{\max}} &= \mathbb{P} \paren{
      \sum_a \pi(a \mid s)r(s,a) + \gamma \sum_{a, s'}\pi(a \mid s)P(s' \mid s, a) v(s') \leq v_{\max}
    },
    \intertext{}
    &= \mathbb{P} \paren{
      \gamma \sum_{a, s'}\pi(a \mid s)P(s' \mid s, a) v(s') \leq v_{\max} - \sum_a \pi(a \mid s)r(s,a)
    }.
    \intertext{Since $\sum_a \pi(a \mid s)r(s,a) \leq r_{\max}$, then}
    &\geq \mathbb{P} \paren{
      \gamma \sum_{a, s'}\pi(a \mid s)P(s' \mid s, a) v(s') \leq v_{\max} - r_{\max} 
    }.
    \intertext{By definition of $v_{\max}$}
    &\geq \mathbb{P} \paren{
      \sum_{a, s'}\pi(a \mid s)P(s' \mid s, a) v(s') \leq v_{\max}
    }.
    \intertext{Finally, since $v(s') \leq v_{\max}$ for any $s' \in \mathcal{S}$, then}
    &=1.
  \end{align}
  Under the same logic, we can similarly show that $\mathbb{P}\paren{\paren{\mathcal{T}^\pi
  \mu}(s) \geq v_{\min}} =  1$, such that $\mathbb{P}\paren{\paren{\mathcal{T}^\pi \mu}(s) \in
  [v_{\min}, v_{\max}]} =  1$ for any $s \in \mathcal{S}$.
\end{proof}

\section{Implementation Details}
\label{app:implementation_details}
\subsection{Quantile Huber Loss}
\label{subsec:quantile_huber_loss}
We adopt the quantile Huber loss from \citet{dabney_distributional_2018} in order to train the
distributional critic. The Huber loss is given by
\begin{equation}
  \mathcal{L}_{\kappa}(u) = \begin{cases}
    \frac{1}{2}u^2, & \text{if } \abs{u} \leq \kappa \\
    \kappa(\abs{u} - \frac{1}{2}\kappa) & \text{otherwise}
  \end{cases},
\end{equation}
and the quantile Huber loss is defined by
\begin{equation}
  \rho^{\kappa}_\tau(u) = \abs{\tau - \delta(u < 0)}\mathcal{L}_\kappa(u).
\end{equation}
For $\kappa = 0$, we recover the standard quantile regression loss, which is not smooth as $u \to
0$. In all our experiments we fix $\kappa=1$ and to simplify notation define $\rho^{1}_\tau =
\rho_\tau$.

\subsection{EQR-SAC Algorithm}
\begin{algorithm}[t]
   \caption{Epistemic Quantile-Regression with Soft Actor-Critic (EQR-SAC)}
   \label{algorithm:our_algorithm_deep_rl}
\begin{algorithmic}[1]
  \STATE Initialize policy $\pi_{\phi}$, MDP ensemble $\Gamma_\psi$, quantile critic $q$,
  environment dataset $\mathcal{D}$, model dataset $\mathcal{D}_{\text{model}}$, utility function
  $f$.

  \STATE Warm-up $\mathcal{D}$ with rollouts under $\pi_\phi$
  \STATE global step $\leftarrow 0$
  \FOR{episode $t=0, \dots, T-1$}
    \FOR{$E$ steps}
      \IF{global step \% $F == 0$}
        \STATE Train $\Gamma_\psi$ on $\mathcal{D}$ via maximum likelihood
        \FOR{each MDP dynamics in $\Gamma_\psi$}
          \FOR{$L$ model rollouts}
            \STATE Perform $k$-step rollouts starting from $s \sim \mathcal{D}$; add to $\mathcal{D}_{\text{model}}$\label{line:model_rollouts}
          \ENDFOR
        \ENDFOR
      \ENDIF
      \STATE Take action in environment according to $\pi_{\phi}$; add to $\mathcal{D}$
      \FOR{$G$ gradient updates}
        \STATE Update $\set{q_i}_{i=1}^{m}$ with mini-batches from $\mathcal{D}_{\text{model}}$, via SGD on \cref{eq:critic_loss} \label{line:q_update}
        \STATE Update $\pi_\phi$ with mini-batches from $\mathcal{D}_{\text{model}}$, via SGD on \cref{eq:actor_loss} \label{line:pi_update}
      \ENDFOR
    \ENDFOR
    \STATE global step $\leftarrow$ global step $+ 1$
  \ENDFOR
\end{algorithmic}
\end{algorithm}

A detailed execution flow for training an EQR-SAC agent is presented in
\cref{algorithm:our_algorithm_deep_rl}. Further implementation details are now provided.

\paragraph{Model learning.} We use the \texttt{mbrl-lib} Python library from
\citet{pineda_mbrl-lib_2021} to train $N$ neural networks (Line 7). Our default architecture
consists of four fully-connected layers with 200 neurons each (for the Quadruped environments we use
400 neurons to accomodate the larger state space). The networks predict delta states, $(s' - s)$,
and receives as inputs normalizes state-action pairs. The normalization statistics are updated each
time we train the model and are based on the training dataset $\mathcal{D}$. We use the default
initialization that samples weights from a truncated Gaussian distribution, but we increase by a
factor of 2 the standard deviation of the sampling distribution.

\paragraph{Capacity of $\mathcal{D}_{\text{model}}$.} The capacity of the model buffer is computed
as $k \times L \times F \times N \times \Delta$, where $\Delta$ is the number of model updates we
want to retain data in the buffer. That is, the buffer is filled only with data from the latest
$\Delta$ rounds of model training and data collection (Lines 6-10).

\paragraph{Critic Loss.} The distributional critic is updated in Line 13, for which we use the loss
function \cref{eq:critic_loss}. To approximate the target quantiles \cref{eq:target_quantile}, we
use the learned generative model and the policy to generate transition tuples $(r, s', a')$. More
specifically, each $(s, a)$ pair in a mini-batch from $\mathcal{D}_{\text{model}}$ is repeated $X$
times and passed through every member of the ensemble of dynamics, thus generating $n$ batches of
$X$ predictions $(r, s')$. Then, every $s'$ prediction is repeated $Y$ times and passed through
$\pi_\phi$, thus obtaining $XY$ next state-action pairs $(s', a')$. This generated data is finally
used in \cref{eq:target_quantile} to better approximate the expectation. In our experiments we use
$X = Y$ and keep their product as a hyperparameter controlling the total amount of samples we use to
approximate the expectation.

\paragraph{Reference Implementation.} We use a single codebase for all experiments and share
architecture components amongst baselines whenever possible. The execution of experiments follows the workflow of \cref{algorithm:our_algorithm_deep_rl}. The SAC base
implementation follows the open-source repository
\url{https://github.com/pranz24/pytorch-soft-actor-critic} and we allow for either model-free or
model-based data buffers for the agent's updates, as done in \texttt{mbrl-lib}.

\raggedbottom

\section{Hyperparameters}
\label{app:hparams}
\renewcommand{\arraystretch}{1.2}
\begin{table}[H]
\caption{Hyperparameters for DeepMind Control Suite. In red, we highlight the only deviations of the base hyperparameters across all environments and baselines.}
\label{tab:hparam}
\begin{center}
\begin{tabular}{|c|c|}
\toprule
\textbf{Name}  & \textbf{Value} \\
\midrule
\multicolumn{2}{|c|}{\textbf{General}} \\
\midrule
$T$ - \# episodes & $250$\\
$E$ - steps per episode & $10^3$ \\ 
Replay buffer $\mathcal{D}$ capacity & $10^5$ \\
Warm-up steps (under initial policy) & $5 \times 10^3$ \\
\midrule
\multicolumn{2}{|c|}{\textbf{SAC}} \\
\midrule
$G$ - \# gradient steps & $10$\\
Batch size & $256$ \\
Auto-tuning of entropy coefficient $\alpha$? & Yes  \\
Target entropy & $-\text{dim}(\mathcal{A})$ \\
Actor MLP network & 2 hidden layers - 128 neurons - Tanh activations \\
Critic MLP network & 2 hidden layers - 256 neurons - Tanh activations \\
Actor/Critic learning rate & $3\times 10^{-4}$ \\
\midrule
\multicolumn{2}{|c|}{\textbf{Dynamics Model}} \\
\midrule
$n$ - ensemble size & $5$\\
$F$ - frequency of model training (\# steps) & $250$ \\
$L$ - \# model rollouts per step & $400$ \\
$k$ - rollout length & $5$ \\
$\Delta$ - \# Model updates to retain data & $10$ \\
Model buffer $\mathcal{D}_{\text{model}}$ capacity \textcolor{red}{(EQR-SAC)} & $L \times F \times k \times \Delta \textcolor{red}{(\times n)} = 5 \times 10^6 \textcolor{red}{(25 \times 10^6)}$
\\
Model MLP network \textcolor{red}{(quadruped)} & 4 layers - 200 \textcolor{red}{(400)} neurons - SiLU activations
\\
Learning rate & $1\times 10^{-3}$\\
\midrule
\multicolumn{2}{|c|}{\textbf{Quantile Network}} \\
\midrule
$m$ - \# quantiles & $51$ \\
\# $(s', a')$ samples (EQR-SAC only) & $25$ \\
\bottomrule
\end{tabular}
\end{center}
% \vspace{5ex}
\end{table}

\section{DM Control Learning Curves}
\label{app:dm_control_curves}
\begin{figure}[H]
  \includegraphics{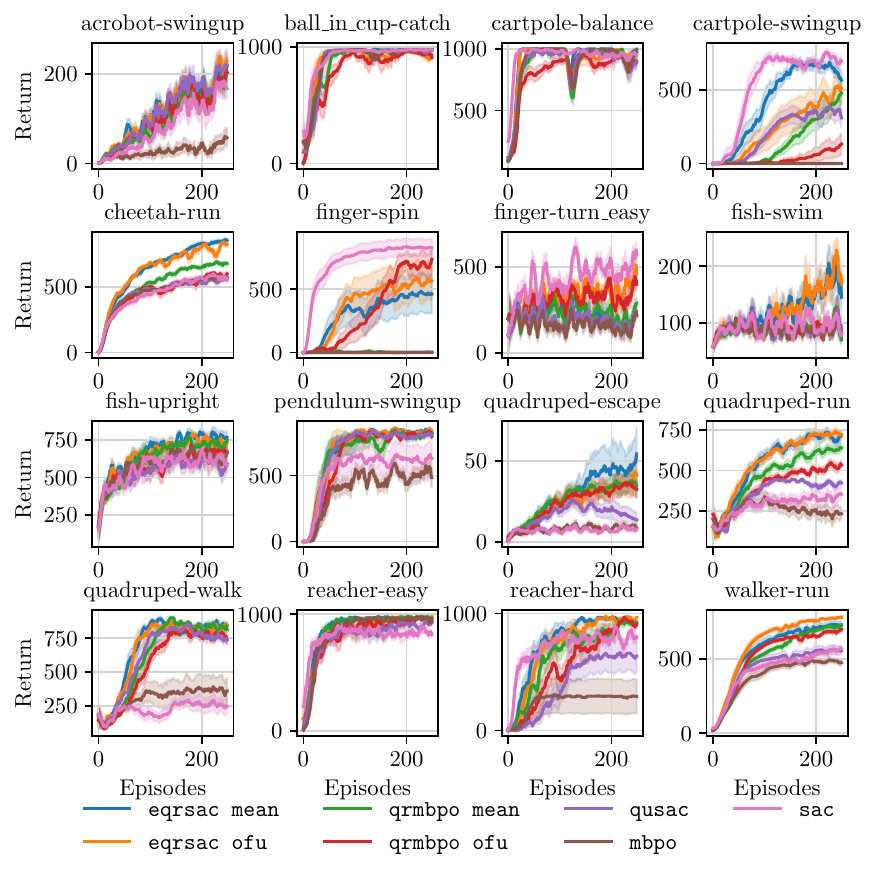}
  \caption{Individual learning curves of DMC benchmark.}
  \label{fig:full_dm_control}
  % \vspace{2in}
\end{figure}

\pagebreak
\section{DM Control Final Scores}
\label{app:dm_control_scores}
\renewcommand{\arraystretch}{2.0}
\begin{table}[!ht]
\caption{Scores in DMC benchmark after 250 episodes (or 250K environment steps). For each
environment, we report the mean and standard error scores over 10 random seeds. We \textbf{bold} the
highest final mean score per environment.}
\label{tab:dm_control_scores}
\centerline{
\resizebox{1.0\textwidth}{!}{%
\begin{tabular}{|c|ccccccc|}
\toprule
Environment & \texttt{sac} & \texttt{mbpo} & \texttt{qrmbpo mean} &\texttt{qrmbpo ofu} &
\texttt{qusac} & \texttt{eqrsac mean} & \texttt{eqrsac ofu}\\
\midrule
acrobot-swingup & $167.7\pm22.4$ & $57.7\pm19.9$ & $202.3\pm18.7$ & $204.0\pm21.4$ &
$220.3\pm30.8$ & $217.8\pm18.5$ & $\bm{229.4}\pm17.4$ \\

ball-in-cup-catch & $972.6\pm3.3$ & $972.5\pm1.7$ & $974.4\pm2.3$ & $908.3\pm25.1$ &
$972.8\pm2.8$ & $\bm{977.2}\pm1.4$ & $928.7\pm18.5$\\

cartpole-balance-sparse & $949.5\pm18.7$ & $985.6\pm9.7$ & $977.0\pm22.2$ & $894.9\pm42.8$ &
$904.5\pm37.3$ & $\bm{997.8}\pm2.1$ & $968.0\pm15.2$\\

cartpole-swingup-sparse & $\bm{693.8}\pm27.2$ & $0.1\pm0.1$ & $476.2\pm72.6$ & $131.8\pm72.1$ &
$310.6\pm64.3$ & $566.4\pm54.4$ & $510.6\pm86.9$\\

cheetah-run & $551.6\pm22.6$ & $571.4\pm19.3$ & $679.1\pm10.7$ & $598.4\pm16.2$ & $567.4\pm16.8$ &
$\bm{854.0}\pm11.7$ & $820.0\pm18.4$\\

finger-spin & $\bm{827.5}\pm68.1$ & $0.1\pm0.1$ & $1.2\pm1.1$ & $734.4\pm89.7$ & $3.2\pm2.6$ &
$567.1\pm146.7$ & $461.9\pm154.1$\\

finger-turn-easy & $\bm{571.3}\pm31.3$ & $220.0\pm20.0$ & $289.8\pm34.1$ & $399.0\pm35.7$ &
$220.0\pm20.0$ & $221.3\pm19.9$ & $460.5\pm58.3$\\

fish-swim & $79.9\pm10.9$ & $80.6\pm10.0$ & $70.0\pm8.7$ & $91.9\pm13.3$ & $83.8\pm10.0$ &
$145.1\pm27.3$ & $\bm{168.3}\pm20.4$\\

fish-upright & $579.4\pm50.8$ & $660.5\pm59.5$ & $749.9\pm29.1$ & $671.4\pm32.2$ & $591.9\pm53.5$ &
$\bm{766.2}\pm45.3$ & $735.0\pm23.5$\\

pendulum-swingup & $631.0\pm111.7$ & $484.5\pm76.4$ & $819.2\pm17.2$ & $796.9\pm25.2$ &
$808.6\pm19.7$ & $\bm{834.4}\pm15.7$ & $833.7\pm16.0$\\

quadruped-escape & $8.0\pm1.2$ & $8.8\pm1.8$ & $34.5\pm7.1$ & $32.4\pm5.0$ & $13.7\pm4.3$
& $\bm{54.2}\pm16.7$ & $41.1\pm11.4$\\

quadruped-run & $352.0\pm36.4$ & $232.3\pm41.2$ & $638.5\pm26.0$ & $532.6\pm19.0$ & $421.9\pm16.8$
& $\bm{719.8}\pm19.0$ & $712.5\pm23.6$\\

quadruped-walk & $245.5\pm57.4$ & $360.1\pm95.1$ & $815.1\pm25.4$ & $739.4\pm38.2$ & $734.8\pm26.5$
& $844.3\pm26.8$ & $\bm{849.2}\pm17.1$\\

reacher-easy & $824.6\pm21.9$ & $474.3\pm20.8$ & $968.6\pm9.8$ & $959.1\pm13.2$ & $943.1\pm13.8$ &
$931.2\pm21.5$ & $\bm{977.9}\pm2.5$\\

reacher-hard & $797.5\pm38.8$ & $291.9\pm146.3$ & $921.8\pm22.2$ & $905.0\pm31.6$ & $635.0\pm139.5$
& $919.6\pm15.7$ & $\bm{965.3}\pm9.8$\\

walker-run & $568.9\pm19.1$ & $474.3\pm20.8$ & $725.5\pm10.8$ & $698.9\pm13.7$ & $553.8\pm30.9$ &
$727.4\pm24.3$ & $\bm{779.3}\pm7.9$\\

\bottomrule
\end{tabular}
}
}
\end{table}

% \resizebox{1.0\textwidth}{!}{%
\renewcommand{\arraystretch}{1.0}
\begin{table}[!ht]
\caption{Normalized inter-quartile mean scores in DMC benchmark after 100 and 250 episodes. For each
aggregation metric, we report the point estimate and the 95\% bootstrap confidence interval within
parentheses, following the methodology in \citet{agarwal_deep_2021}. We \textbf{bold} the highest
mean scores in each case.}
\label{tab:agg_dm_control_scores}
\centerline{
% \resizebox{0.5\textwidth}{!}{%
\begin{tabular}{lccc}
  \toprule
  Method & IQM-100 & IQM-250 \\ 
  \midrule
  \tikzcircle[fill=MPL-blue]{3pt} \texttt{eqrsac mean} & \textbf{0.63} (0.55, 0.72) & 0.73 (0.65,
  0.81) \\ 

  \tikzcircle[fill=MPL-orange]{3pt} \texttt{eqrsac ofu} & 0.61 (0.53, 0.69) & \textbf{0.76} (0.69,
  0.81)\\ 

  \tikzcircle[fill=MPL-green]{3pt} \texttt{qrmbpo mean} & 0.46 (0.37, 0.56) & 0.65 (0.56, 0.73) \\ 

  \tikzcircle[fill=MPL-red]{3pt} \texttt{qrmbpo ofu} & 0.46 (0.38, 0.55) & 0.64 (0.56, 0.69) \\ 

  \tikzcircle[fill=MPL-purple]{3pt} \texttt{qusac} & 0.41 (0.33, 0.50) & 0.51 (0.43, 0.58) \\

  \tikzcircle[fill=MPL-brown]{3pt} \texttt{mbpo} & 0.30 (0.23, 0.39) & 0.32 (0.24, 0.42)  \\ 

  \tikzcircle[fill=MPL-pink]{3pt} \texttt{sac} & 0.54 (0.46, 0.61) & 0.59 (0.52, 0.66) \\ 
  \bottomrule
\end{tabular}
}
% }
\end{table}

\section{Ablations}
\label{app:ablations}
In this section, we present additional ablation studies on three salient hyperparameters of EQR-SAC:
the number of quantiles $(m)$, the rollout length $(k)$ and the number of model updates to retain
data $(\Delta)$. For all the experiments, we use the default hyperparameters in \cref{tab:hparam}
and only vary the hyperparameter of the corresponding ablation study.

\subsection{Number of Quantiles}
\begin{figure}[H]
  \includegraphics{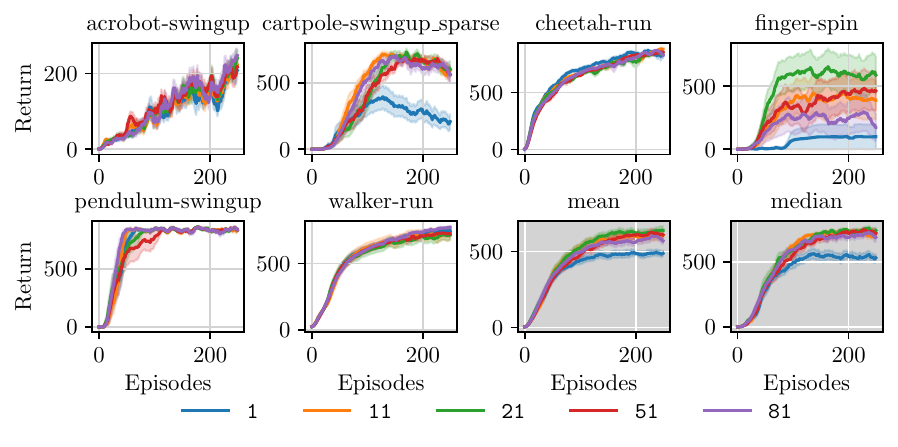}
  \includegraphics{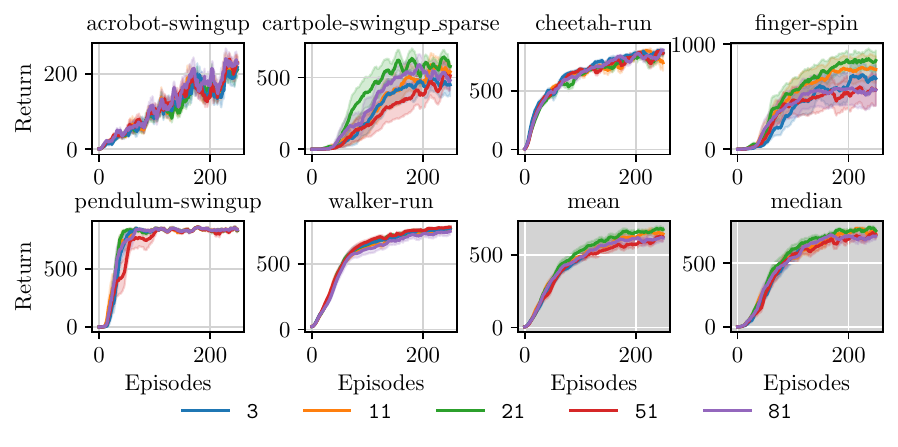}
  \caption{Number of quantiles ($m$) ablation study. \textbf{(Top)} EQR-SAC-mean. \textbf{(Bottom)}
   EQR-SAC-ofu. Note that for EQR-SAC-ofu we require $m>1$ in order to estimate the standard
   deviation of quantiles for the optimistic objective function of the actor, thus we select a
   minimum value of $m=3$ for this study.}
  \label{fig:num_quantiles_ablation}
\end{figure}
\subsection{Rollout Length}
\begin{figure}[H]
  \includegraphics{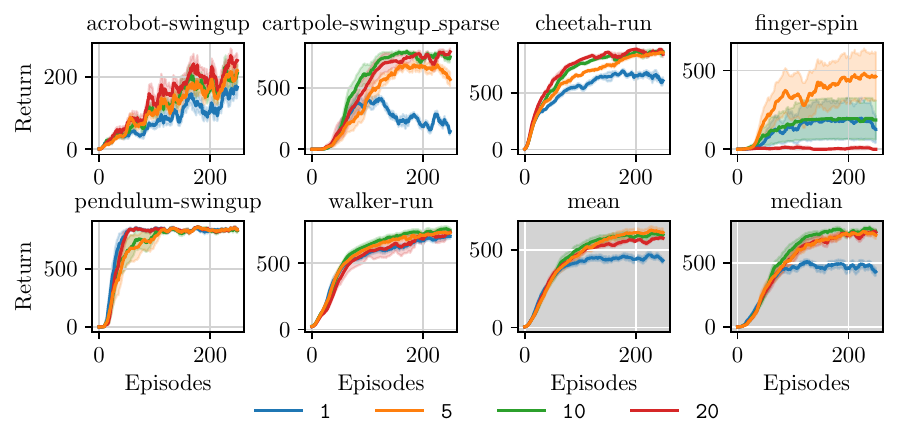}
  \caption{Rollout length ($k$) ablation study for EQR-SAC-mean.}
  \label{fig:rollout_length_ablation}
\end{figure}
\subsection{Number of Model Updates to Retain Data}
\begin{figure}[H]
  \includegraphics{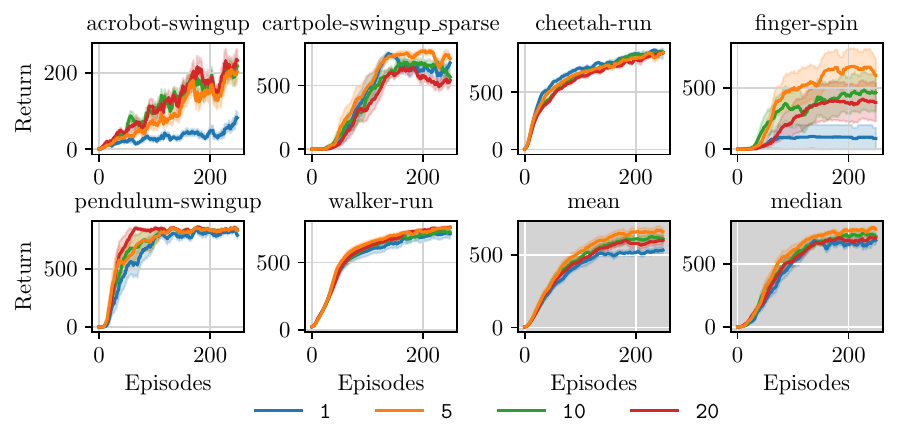}
  \caption{Number of model updates to retain data ($\Delta$) ablation study for EQR-SAC-mean.}
  \label{fig:retain_epochs_ablation}
\end{figure}

\section{Optimistic Policy Optimization}
\label{app:optimistic_opt}

In this section, we investigate the effect of using optimistic value estimates for policy
optimization. To conduct the study, we propose a simple variant of EQR-SAC, named EQR-SAC-$\tau$,
which uses as the actor's objective function the closest quantile level to a given target $\tau$.
For instance, in our experiments we use $m=21$ and target levels $\set{0.5, 0.7, 0.9}$, which
correspond to actual levels $\set{0.5, 0.69, 0.88}$.

\paragraph{Dense versus sparse rewards.} We first investigate how optimism affects performance
in environments with dense versus sparse rewards and present the results in
\cref{fig:dense_vs_sparse_optimism}. For environments with dense rewards (cheetah, walker) optimism
has little to no effect, while it results in largely different performance in envionments with
sparse rewards (reacher-hard, finger-spin). Even though we would expect optimism to be generally
helpful in all exploration tasks, our results indicate its effect is environment-dependent: the most
optimistic objective ($\tau=0.9$) performed worst in reacher-hard but obtained the best performance
in finger-spin; inversely, the least optimistic objective ($\tau=0.5$) performed the best in
reacher-hard, but worst in finger-spin.

\begin{figure}[H]
  \includegraphics{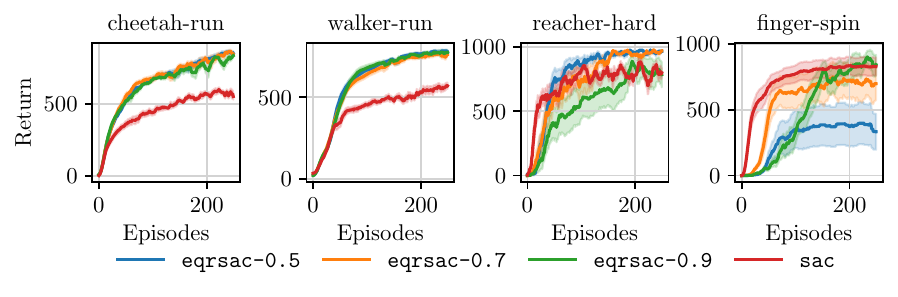}
  \caption{Evaluation of EQR-SAC-$\tau$ for different quantile levels. The two tasks on the left have dense rewards, while the other two have sparse rewards.}
  \label{fig:dense_vs_sparse_optimism}
\end{figure}

\paragraph{Action costs and sparse rewards.} In \cref{subsec:mountaincar}, we observe that the
combination of action costs and sparse rewards represents a pitfall for methods like SAC, especially
since the optimal policy must issue large actions to observe the reward. Meanwhile, the
quantile-based optimistic approaches performed best. In this experiment, we test the same setting in
two tasks with sparse rewards from the DeepMind Control suite, where we add an action cost
proportional to the squared norm of the action taken by the agent. Namely,
\begin{equation}
  \label{eq:action_cost}
  \text{\texttt{action\textunderscore cost}} = \rho \sum_{i=1}^{\abs{\mathcal{A}}} a_i^2
\end{equation} 
where $\rho$ is an environment specific base multiplier, $a_i$ is the $i$-th component of the action
vector and $\abs{\mathcal{A}}$ is the size of the action space. For \texttt{cartpole-swingup} we use
$\rho=0.001$ and for \texttt{pendulum} we use $\rho = 0.01$. The resuls in
\cref{fig:action_cost_ablation} show a similar degradation of performance for SAC. Unlike the
MountainCar experiments of \cref{subsec:mountaincar}, higher levels of optimism mostly resulted in
less sample-efficient learning.

\begin{figure}[H]
  \includegraphics{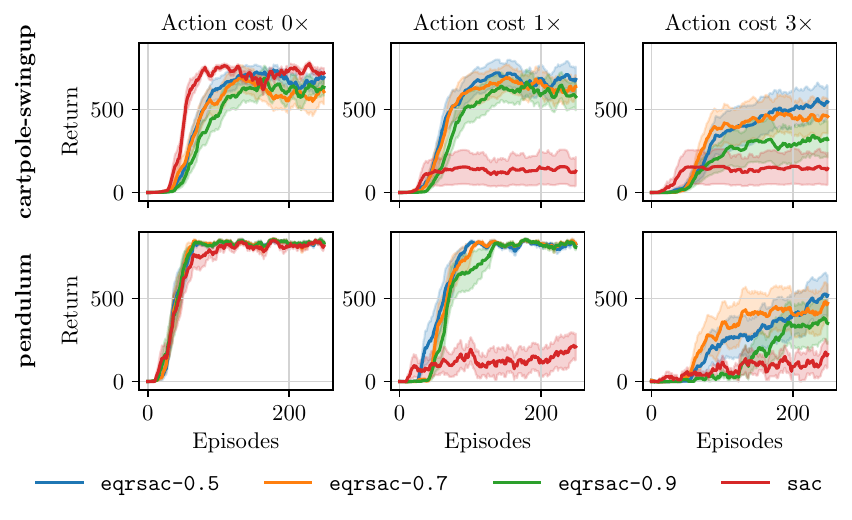}
  \caption{Evaluation of EQR-SAC-$\tau$ for different quantile levels and increasing action
  costs. The top row corresponds to the cartpole swingup task and the bottom row to the pendulum
  swingup. The action costs range from zero (left column), to a $3\times$ multiplier on
  \cref{eq:action_cost}.}
  \label{fig:action_cost_ablation}
\end{figure}

% REFERENCES

\clearpage
% For some reason we need to add \typeout{} to compile references in Overleaf
% See: https://www.overleaf.com/learn/latex/Errors/Citation_XXX_on_page_XXX_undefined_on_input_line_XXX
\typeout{}
\bibliography{references}

\end{document}